\newtheorem*{theorem*}{Theorem}
\newtheorem*{proposition*}{Proposition}
\newtheorem{theorem}{Theorem}[section]
\newtheorem{proposition}[theorem]{Proposition}
\theoremstyle{definition}
\newtheorem{remark}{Remark}
\definecolor{keyword}{rgb}{0.5,0,0.5} 
\definecolor{comment}{rgb}{0.0,0.5,0.0} 
\definecolor{string}{rgb}{0.6,0.1,0.0} 
\definecolor{background}{rgb}{0.95,0.95,0.92} 
\lstdefinestyle{mystyle}{
    backgroundcolor=\color{background},   
    commentstyle=\color{comment},         
    keywordstyle=\color{keyword}\bfseries, 
    numberstyle=\tiny\color{gray},        
    stringstyle=\color{string},           
    basicstyle=\ttfamily\footnotesize,    
    breakatwhitespace=false,              
    breaklines=true,                      
    captionpos=b,                         
    keepspaces=true,                      
    numbers=left,                         
    numbersep=5pt,                        
    showspaces=false,                     
    showstringspaces=false,               
    showtabs=false,                       
    tabsize=4,                            
    frame=single,                         
    rulecolor=\color{black},              
}
\newcommand{\by}{{\bf y}}
\newcommand{\bz}{{\bf z}}
\newcommand{\bx}{{\bf x}}
\newcommand{\bM}{{\bf M}}
\newcommand{\bS}{{\bf S}}
\newcommand{\bHAM}{{\bf H}}
\newcommand{\bH}{{\bf M}}
\newcommand{\bW}{{\bf W}}
\newcommand{\bu}{{\bf u}}
\newcommand{\bb}{{\bf b}}
\newcommand{\cL}{{\mathcal{L}}}
\newcommand{\bA}{{\bf A}}
\newcommand{\ba}{{\bf a}}
\newcommand{\bF}{{\bf F}}
\newcommand{\bB}{{\bf B}}
\newcommand{\bC}{{\bf C}}
\newcommand{\bo}{{\bf o}}
\newcommand{\R}{{\mathbb R}}
\newcommand{\Dt}{{\Delta t}}
\newcommand{\fref}[1] {Fig.~\ref{#1}}
\newcommand{\Tref}[1]{Table~\ref{#1}}
\newcommand{\ident}{\mathrm{I}}
\newcommand{\bD}{{\bf D}}
\newcommand{\set}[2]{{\left\{ #1 \,\middle|\, #2 \right\}}}
\newcommand{\slot}{{\,\cdot\,}}
\DeclareMathOperator{\diag}{diag}
\DeclareMathOperator{\relu}{ReLU}
\DeclareMathOperator{\enc}{enc}
\DeclareMathOperator{\dec}{dec}
\DeclareMathOperator{\sigmoid}{sigmoid}
\DeclareMathOperator{\glu}{GLU}
\DeclareMathOperator{\gelu}{GELU}
\DeclareRobustCommand{\circbullet}{\mathbin{\vphantom{\circ}\text{\circbullet@}}}
\newcommand{\circbullet@}{%
  \check@mathfonts
  \m@th\ooalign{%
    \clipbox{0 0 0 {\dimexpr\height-\fontdimen22\textfont2}}{$\bullet$}\cr
    $\circ$\cr
  }%
}
\DeclareRobustCommand{\bulletcirc}{\mathbin{\text{\bulletcirc@}}}
\newcommand{\bulletcirc@}{%
  \check@mathfonts
  \m@th\ooalign{%
    \raisebox{\fontdimen22\textfont2}{\clipbox{0 {\fontdimen22\textfont2} 0 0}{$\bullet$}}\cr
    $\circ$\cr
  }%
}
\title{Oscillatory State-Space Models}
\author{T. Konstantin Rusch\\
MIT\\
\texttt{tkrusch@mit.edu} \\
\And
Daniela Rus \\
MIT\\
}
\newcommand{\rev}[1]{\textcolor{black}{#1}}
\begin{document}

\maketitle

\begin{abstract}
We propose Linear Oscillatory State-Space models (LinOSS) for efficiently learning on long sequences. Inspired by cortical dynamics of biological neural networks, we base our proposed LinOSS model on a system of forced harmonic oscillators. A stable discretization, integrated over time using fast associative parallel scans, yields the proposed state-space model. We prove that LinOSS produces stable dynamics only requiring nonnegative diagonal state matrix. This is in stark contrast to many previous state-space models relying heavily on restrictive parameterizations. Moreover, we rigorously show that LinOSS is universal, i.e., it can approximate any continuous and causal operator mapping between time-varying functions, to desired accuracy. In addition, we show that an implicit-explicit discretization of LinOSS perfectly conserves the symmetry of time reversibility of the underlying dynamics. Together, these properties enable efficient modeling of long-range interactions, while ensuring stable and accurate long-horizon forecasting. Finally, our empirical results, spanning a wide range of time-series tasks from mid-range to very long-range classification and regression, as well as long-horizon forecasting, demonstrate that our proposed LinOSS model consistently outperforms state-of-the-art sequence models. Notably, LinOSS outperforms Mamba and LRU by nearly 2x on a sequence modeling task with sequences of length 50k.  Code: \url{https://github.com/tk-rusch/linoss}.
\end{abstract}

\section{Introduction}
State-space models \citep{s4,hasani2022liquid,s5,lru} have recently emerged as a powerful tool for learning on long sequences. These models posses the statefullness and fast inference capabilities of Recurrent Neural Networks (RNNs) together with many of the benefits of Transformers \citep{vaswani,bert}, such as efficient training and competitive performance on large-scale language and image modeling tasks. For these reasons, state-space models have been successfully implemented as foundation models, surpassing Transformer-based counterparts in several key modalities, including language, audio, and genomics \citep{mamba}.

Originally, state-space models have been introduced to modern sequence modelling by leveraging specific structures of the state matrix, i.e., normal plus low-rank HiPPO matrices \citep{hippo,s4}, allowing to solve linear recurrences via a Fast Fourier Transform (FFT). This has since been simplified to only requiring diagonal state matrices \citep{gu2022parameterization,s5,lru} while still obtaining similar or even better performance. However, due to the linear nature of state-space models, the corresponding state matrices need to fulfill specific structural properties in order to learn long-range interactions and produce stable predictions. Consequently, these structural requirements heavily constrain the underlying latent feature space, potentially impairing the model's expressive power.

In this article, we adopt a radically different approach by observing that forced harmonic oscillators, the basis of many systems in physics, biology, and engineering, can produce stable dynamics while at the same time seem to ensure expressive representations. Motivated by this, we propose to construct state-space models based on stable discretizations of forced linear second-order ordinary differential equations (ODEs) modelling oscillators. Our additional contributions are:
\begin{itemize}
    \item we introduce implicit and implicit-explicit associative parallel scans ensuring fast training and inference.
    \item we show that our proposed state-space model yields stable dynamics and is able to learn long-range interactions only requiring nonnegative diagonal state matrix.
    \item we demonstrate that a symplectic discretization of our underlying oscillatory system conserves its symmetry of time reversibility.
    \item we rigorously prove that our proposed state-space model is a universal approximator of continuous and causal operators between time-series.
    \item we provide an extensive empirical evaluation of our model on a wide variety of sequential data sets with sequence lengths reaching up to 50k. Our results demonstrate that our model consistently outperforms or matches the performance of state-of-the-art state-space models, including Mamba, S4, S5, and LRU.
\end{itemize}

\section{The proposed State-Space Model}
Our proposed state-space model is based on the following system of forced linear second-order ODEs together with a linear readout,
\begin{equation}
\label{eq:de_loss}
\begin{aligned}
\by^{\prime\prime}(t) &= -\bA\by(t) + \bB\bu(t) + \bb, \\
\bx(t) &= \bC\by(t) + \bD\bu(t),
\end{aligned}
\end{equation}
with hidden state $\by(t) \in \mathbb{R}^m$, output state $\bx(t) \in \R^q$, time-dependent input signal $\bu(t)\in \mathbb{R}^p$, weights $\bA\in \mathbb{R}^{m\times m}$, $\bB \in \mathbb{R}^{m\times p}$, $\bC \in \R^{q \times m}$, $\bD \in \R^{q\times p}$, and bias $\bb \in \R^m$. Note that $\bA$ is a diagonal matrix, i.e., with non-zero entries only on its diagonal. We further introduce an auxiliary state $\bz(t) \in \mathbb{R}^m$, with $\bz = \by^\prime$. We can thus write \eqref{eq:de_loss} (omitting the bias $\bb$ and linear readout $\bx$) equivalently as,
\begin{equation}
\label{eq:aux_de_loss}
\begin{aligned}
\bz^{\prime}(t) &= -\bA\by(t) + \bB\bu(t),\\
\by^\prime(t) &= \bz(t).
\end{aligned}
\end{equation}

\begin{figure}[ht]
\begin{center}
\begin{minipage}{\textwidth}
\begin{tikzpicture}
    \node[anchor=south west,inner sep=0] (image) at (0,0) {\includegraphics[width=.95\textwidth]{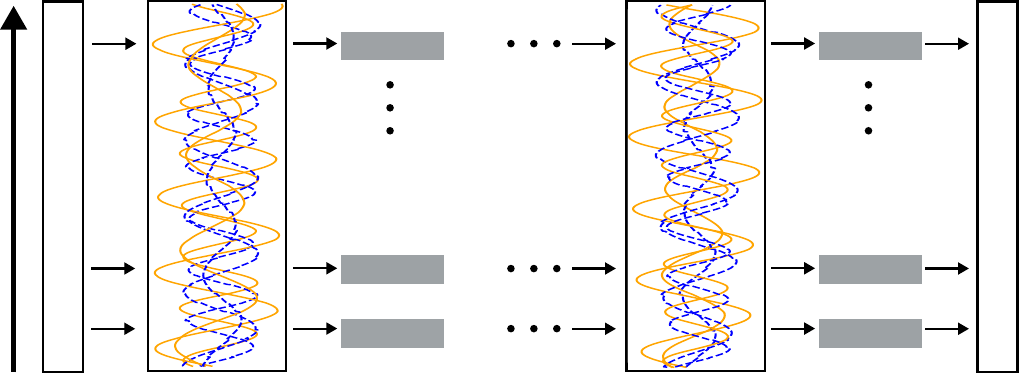}};
    \begin{scope}[x={(image.south east)},y={(image.north west)}]
        \draw (-0.005, .5) node {\rotatebox{90}{\textbf{time} $t$}};
        \draw (0.065, .5) node {\rotatebox{90}{\textbf{Input Sequence $\bu(t)$}}};
        \draw (0.98, .5) node {\rotatebox{90}{\textbf{Output Sequence}}};
        \draw (0.2125,-0.05) node {LinOSS layer};
        \draw (0.4,-0.05) node {nonlinear layer};
        \draw (0.6825,-0.05) node {LinOSS layer};
        \draw (0.87,-0.05) node {nonlinear layer};

        \draw[decorate,decoration={brace,amplitude=10pt}, thick]  (0.48,-0.1) -- (0.15,-0.1) node[midway,yshift=-0.6cm] {LinOSS block};

        \draw[decorate,decoration={brace,amplitude=10pt}, thick]  (0.94,-0.1) -- (0.62,-0.1) node[midway,yshift=-0.6cm] {LinOSS block};

         \draw[thick,dashed,blue] (0.22,1.05) -- (0.245,1.05) node[right] {\textcolor{blue}{$\bz(t)$}};
         \draw[thick,orange] (0.13,1.05) -- (0.155,1.05) node[right] {\textcolor{orange}{$\by(t)$}};

        \draw[thick,dashed,blue] (0.69,1.05) -- (0.715,1.05) node[right] {\textcolor{blue}{$\bz(t)$}};
         \draw[thick,orange] (0.6,1.05) -- (0.625,1.05) node[right] {\textcolor{orange}{$\by(t)$}};
         
        
    \end{scope}
\end{tikzpicture}
\caption{Schematic drawing of the proposed Linear Oscillatory State-Space model (LinOSS). The input sequences are processed through multiple LinOSS blocks. Each block is composed of a LinOSS layer \eqref{eq:aux_de_loss} ($\by(t)$ plotted using \textcolor{orange}{orange solid lines} and $\bz(t)$ using \textcolor{blue}{blue dashed lines}) followed by a nonlinear transformation, specifically a Gated Linear Units \citep{glu} (GLU) layer in our case. After passing through several LinOSS blocks, the latent sequences are decoded to produce the final output sequence.}
\label{fig:linoss}
\end{minipage}
\end{center}
\vspace{-0.5cm}
\end{figure}

\subsection{Motivation and background}
To demonstrate that the underlying ODE in \eqref{eq:de_loss} models a network of forced harmonic oscillators, we begin with the scalar case by setting $p = m = 1$ in \eqref{eq:de_loss}. Choosing $\bB = \bb = 0$, we obtain the classic ODE, $\by^{\prime \prime} + \bA \by = 0$, which describes simple harmonic motion with frequency $\bA>0$, such as that of a simple pendulum \citep{GHbook}. Now, allowing $\bB\neq 0$ introduces external forcing proportional to the input signal $\bu(t)$, where $\bB$ modulates the effect of the forcing. Finally, setting $p,m>1$ yields an uncoupled system of forced harmonic oscillators.

\paragraph{Neuroscience inspiration.}
Our approach is inspired by neurobiology, where the periodic spiking and firing of action potentials in individual neurons can be observed and analyzed as oscillatory phenomena. Furthermore, entire networks of cortical neurons exhibit behavior-dependent oscillations, reviewed in \citet{buzsaki2004neuronal}. Remarkably, despite their complexity, these neural oscillations share characteristics with harmonic oscillators, as described in \citet{winfree1980geometry}. Building on this insight, we distill the core essence of cortical dynamics and aim to construct a machine learning model following the motion of harmonic oscillations. Finally, we note that our focus is on the theoretical and empirical aspects of the proposed oscillatory state-space model in this manuscript. However, our model can be further used in the context of computational neuroscience emulating characteristic phenomena of brain oscillations, such as frequency-varying oscillations, transient synchronization or desynchronization of discharges, entrainment, phase shifts, and resonance, while at the same time being able to learn non-trivial input-output relations. 

\subsection{Comparison with related work}
State-space models have seen continuous advancements since their introduction to modern sequence modeling in \citet{s4}. The original S4 model \citep{s4}, along with its adaptations \citep{gu2022parameterization,nguyen2022s4nd,sashimi}, utilized FFT to solve linear recurrences. More recently, a simplified variant, S5 \citep{s5}, was introduced, which instead employs associative parallel scans to achieve similar computational speed. Another advancement to S4 have been Liquid Structural State-Space models \citep{hasani2022liquid} which exchanged the static state matrix with an input-dependent state transition module. While all aforementioned models rely on the HiPPO parameterization \citep{hippo}, Linear Recurrent Units (LURs) \citep{lru} demonstrated that even simpler parameterization yield state-of-the-art performance. 
Finally, selective state spaces have been introduced in \citet{mamba} in order to further close the performance gap between state-space models and Transformers utilized in foundation models. 

Oscillatory dynamics as a neural computational paradigm has been originally introduced via Coupled Oscillatory RNNs (CoRNNs) \citep{cornn}. In this work, it has been shown that recurrent models based on nonlinear oscillatory dynamics are able to learn long-range interactions by mitigating the exploding and vanishing gradients problem \citep{pascanu2013difficulty}. This approach was later refined for handling very long sequences in \citet{unicornn}, which utilized uncoupled nonlinear oscillators for fast sequence modeling by adopting a diagonal hidden-to-hidden weight matrix. Interestingly, this method resembles modern state-space models but distinguishes itself by employing nonlinear dynamics. Since then, this concept, generally termed as neural oscillators \citep{neural_oscil}, has been extended to other areas of machine learning, e.g., Graph-Coupled Oscillator Networks (GraphCON) \citep{rusch2022graph} for learning on graph-structured data, and RNNs with oscillatory dynamics combined with convolutions leading to locally coupled oscillatory RNNs \citep{keller2023neural}. Our proposed LinOSS model differs from all these methods by its explicit use of harmonic oscillators via a state-space model approach.

\subsection{Discretization}
Our aim is to approximately solve the linear ODE system \eqref{eq:aux_de_loss} as fast as possible, while at the same time being able to guarantee stability over long time-scales. To this end, we suggest to leverage the following two time integration schemes.

\paragraph{Implicit time integration.} 
We fix a timestep $0<\Dt\leq1$ and define our proposed state-space model hidden states at time $t_n = n\Dt$ as the following implicit discretization of the first order system \eqref{eq:aux_de_loss}:
\begin{equation*}
\begin{aligned}
\bz_{n} &= \bz_{n-1} + \Dt(-\bA\by_n + \bB\bu_n),\\
\by_{n} &= \by_{n-1}+ \Dt\bz_n.
\end{aligned}
\end{equation*}
This can be written in matrix form by introducing $\bx_n = [\bz_n,\by_n]^\top$,
\begin{equation*}
\bH\bx_n = \bx_{n-1}+\bF_n,
\end{equation*}
with 
\begin{equation*}
    \bH = \begin{bmatrix}
  \ident & \Dt\bA \\
  -\Dt\ident & \ident
\end{bmatrix}, \qquad 
\bF_n = 
\begin{bmatrix}
\Dt\bB\bu_n \\
\bf 0
\end{bmatrix}.
\end{equation*}
We can now solve our discretized oscillatory system by simply inverting matrix $\bH$ and computing the induced recurrence. Note that inverting matrices typically requires $\mathcal{O}(m^3)$ operations (where $m$ is the number of rows or columns of the matrix) using methods like Gauss-Jordan elimination, making it computationally expensive. However, by leveraging the Schur complement, we can obtain an explicit form of $\bH^{-1}$ that can be computed in $\mathcal{O}(m)$, thanks to the diagonal structure of $\bA$. More concretely,
\begin{equation}
\label{eq:LOSS_IM_matrix}
\bH^{-1} = 
\begin{bmatrix}
\ident - \Dt^2\bA\bS & -\Dt\bA\bS \\
\Dt\bS & \bS
\end{bmatrix} = 
\begin{bmatrix}
\bS & -\Dt\bA\bS \\
\Dt\bS & \bS
\end{bmatrix},
\end{equation}
with the inverse of the Schur complement $\bS = (\ident + \Dt^2\bA)^{-1}$ which itself is a diagonal matrix and can thus be trivially inverted. 
We point out that among other choices, a straightforward condition that ensures $\bS$ is well-defined is $\bA_k\geq0$ for all $k=1,\dots,m$.
The recurrence of the proposed model is then given as,
\begin{equation}
\label{eq:im_model}
\bx_n = \bH^{\text{IM}}\bx_{n-1}+\bF_n^{\text{IM}},
\end{equation}
with $\bH^{\text{IM}}=\bH^{-1}$, and $\bF^{\text{IM}}_n = \bH^{-1}\bF_n$.
As we will show in the subsequent section, this discretization leads to a globally asymptotically stable discrete dynamical system.

\paragraph{Implicit-explicit time integration (IMEX).}
Another discretization yielding stable dynamics that, however, do not converge exponentially fast to a steady-state can be obtained by leveraging symplectic integrators. To this end, we fix again a timestep $0<\Dt\leq1$ and define our proposed state-space model hidden states at time $t_n = n\Dt$ as the following implicit-explicit (IMEX) discretization of the first order system \eqref{eq:aux_de_loss}:
\begin{equation}
\label{eq:imex_disc}
\begin{aligned}
\bz_{n} &= \bz_{n-1} + \Dt(-\bA\by_{n-1} + \bB\bu_n),\\
\by_{n} &= \by_{n-1}+ \Dt\bz_n.
\end{aligned}
\end{equation}
The only difference compared to the previous fully implicit discretization is the explicit treatment of the hidden state $\by$ (i.e., using $\by_{n-1}$ instead of $\by_n$) in the first equation of \eqref{eq:imex_disc}. As before, we can simplify this system in matrix form,
\begin{equation}
\label{eq:imex_model}
\bx_n = \bH^{\text{IMEX}}\bx_{n-1}+\bF_n^{\text{IMEX}},
\end{equation}
with 
\[\bH^{\text{IMEX}} = \begin{bmatrix}
  \ident & -\Dt\bA \\
  \Dt\ident & \ident - \Dt^2\bA
\end{bmatrix}, \qquad \bF_n^{\text{IMEX}} = \begin{bmatrix}
\Dt\bB\bu_n \\
\Dt^2\bB\bu_n
\end{bmatrix}. \]

Interestingly, ODE system \eqref{eq:aux_de_loss} represents a Hamiltonian system \citep{arn1}, with Hamiltonian,
\begin{equation}
\label{eq:hamil}
    \bHAM(\by,\bz,t) = \frac{1}{2}\sum_{k=1}^m \bA_k\by_k^2 + \bz_k^2 - 2\left(\sum_{l=1}^p\bB_{kl}\bu(t)_l\right)\by_k.
\end{equation}
The numerical approximation of this system using the previously described IMEX discretization is symplectic, i.e., it preserves a Hamiltonian close to the Hamiltonian of the continuous system. Thus, by the well-known Liouville's theorem \citep{ss1}, we know that the phase space volume of \eqref{eq:aux_de_loss} as well as of its symplectic approximation \eqref{eq:imex_model} is preserved. This gives rise to invertible model architectures leading to memory efficient implementations of the backpropagation through time algorithm, similar as in \citep{unicornn}. This denotes the most significant difference between the two different discretization schemes, i.e., the IMEX integration-based model is volume preserving, while IM integration-based model introduces dissipative terms. We will see in subsequent sections that both models have their own individual advantages depending on the underlying data. \rev{Finally, we note that stable higher-order time integration schemes (such as higher-order symplectic splitting schemes) can be used in this context as well. An example of leveraging the second-order symplectic velocity Verlet method can be found in Appendix Section \ref{sec:ho_disc}.}

\subsection{Fast recurrence via associative parallel scans}
\label{sec:scans}
Parallel (or associative) scans, first introduced in \citet{kogge1973parallel} and reviewed in \citet{asso_scan}, offer a powerful method for drastically reducing the computational time of recurrent operations. These scans have previously been employed to enhance the training and inference speed of RNNs \citep{martin2017parallelizing,kaul2020linear}. This technique was later adapted for state-space models in \citet{s5}, becoming a crucial component in the development of many state-of-the-art sequence models, including Linear Recurrent Units (LRUs) \citep{lru} and Mamba models \citep{mamba}. 

A parallel scan operates on a sequence $[x_1,\dots,x_N]$ with a binary associative operation $\bullet$, i.e., an operation satisfying $(x\bullet y)\bullet z = x \bullet (y\bullet z)$ for instances $x,y$, and $z$, to return the sequence $[x_1, x_1 \bullet x_2, \dots, x_1 \bullet x_2 \bullet \dots \bullet x_N]$. Under certain assumptions, this operation can be performed in computational time proportional to $\lceil\log_2(N)\rceil$. This is in stark contrast to the computational time of serial recurrence that is proportional to $N$. It is straightforward to check that the following operation is associative:
\[
(\ba_1,\ba_2) \bullet (\bb_1,\bb_2) = (\bb_1\circbullet\ba_1,\bb_1\bulletcirc\ba_2+\bb_2),
\]
where $\circbullet,\bulletcirc$ are the matrix-matrix and matrix-vector products \rev{for matrices $\ba_1,\bb_1$ and vectors $\ba_2,\bb_2$.} Note that both products can be computed in $\mathcal{O}(m)$ time leveraging $2\times2$ block matrices with only diagonal entries in each block, e.g., matrices $\bH^{\text{IM}}$ and $\bH^{\text{IMEX}}$ in \eqref{eq:im_model},\eqref{eq:imex_model}. Clearly, applying a parallel scan based on this associative operation on the input sequence $[(\bH,\bF_1),(\bH,\bF_2),\dots]$ yields an output sequence $[(\bH,\bF_1),(\bH^2,\bH\bF_1+\bF_2),\dots]$ where the solution of the recurrent system $\bx_n = \bH\bx_{n-1}+\bF_n$ (with initial value $\bx_0=\bf 0$) is stored in the second argument of the elements of this sequence. Thus, we can successfully apply parallel scans to both discretizations \eqref{eq:im_model}\eqref{eq:imex_model} of our 
proposed system in order to significantly speed up computations. We refer to the application of this parallel scan to implicit formulations as implicit parallel scans. When applied to implicit-explicit formulations, we describe it as implicit-explicit parallel scans.

We term our proposed sequence model, which efficiently solves the underlying linear system of harmonic oscillators \eqref{eq:aux_de_loss} using fast parallel scans, as \textbf{Lin}ear \textbf{O}scillatory \textbf{S}tate-\textbf{S}pace (\textbf{LinOSS}) model. To differentiate between the two discretization methods, we refer to the model derived from implicit discretization as \textbf{LinOSS-IM}, and the model based on the implicit-explicit discretization as \textbf{LinOSS-IMEX}. A schematic drawing of our proposed state-space model can be seen in \fref{fig:linoss}. \rev{Moreover, a full description of a multi-layer LinOSS model, including specific nonlinear building blocks, can be found in Appendix \ref{sec:arch_details}.}

\section{Theoretical insights}
\subsection{Stability and learning long-range interactions}
\label{sec:stable}
\label{sec:stable_discussion}
Hidden states of classical nonlinear RNNs are updated based on its previous hidden states pushed through a parametric function followed by a (usually) bounded nonlinear activation function such as tanh or sigmoid. This way, the hidden states are guaranteed to not blow up. However, this is not true for linear recurrences, where specific structures of the hidden-to-hidden weight matrix can lead to unstable dynamics. The following simple argument demonstrates that computing the eigenspectrum (i.e., set of eigenvalues) of the hidden-to-hidden weight matrix of a linear recurrent system suffices in order to analyse its stability properties. To this end, let us consider a general linear discrete dynamical system with external forcing given via the recurrence $\bx_n = \bH\bx_{n-1}+\bB\bu_n$, and initial value $\bx_0=\bf 0$. Assuming $\bH$ is diagonalizable (if not, one can make a similar argument leveraging the Jordan normal form), i.e., there exists matrix $\bS$ such that $\bH=\bS {\bf \Lambda} \bS^{-1}$, where $\bf \Lambda$ is a diagonal matrix with the eigenvalues of $\bH$ on its diagonal. The dynamics of the transformed hidden state $\bar{\bx}_n=\bS^{-1}\bx_n$ evolve according to $\bar{\bx}_n = \bS^{-1}\bx_n = \bS^{-1}\bH\bx_{n-1}+\bS^{-1}\bB\bu_n = {\bf \Lambda}\bar{\bx}_{n-1}+\bar{\bB}\bu_n$ with initial value $\bar{\bx}_0 = \bS^{-1}\bx_0$, where $\bar{\bB} = \bS^{-1}\bB$. Unrolling the dynamics (assuming $\bx_0=\bf 0$) yields,
\begin{equation*}
    \bar{\bx}_1 = \bar{\bB}\bu_1, \quad \bar{\bx}_2 = {\bf \Lambda}\bar{\bB}\bu_1 + \bar{\bB}\bu_2, \quad ... \quad \Rightarrow \bx_n = \sum_{k=0}^{n-1} {\bf \Lambda}^k\bar{\bB}\bu_{n-k}. 
\end{equation*}
Clearly, if all eigenvalues $\bf \Lambda$ have magnitude less or equal than 1 the hidden states will not blow up. In addition to stability guarantees, eigenspectra with unit norm allow the model to learn long-range interactions \citep{urnn,gu2022train,lru} by avoiding vanishing and exploding gradients \citep{pascanu2013difficulty}. Therefore, it is sufficient to analyse the eigenspectra of our proposed LinOSS models in order to understand their ability to generate stable dynamics and learn long-range interactions. To this end, we have the following propositions.
\begin{proposition}
\label{prop:ev_im}
Let $\bH^{\text{IM}} \in \R^{m\times m}$ be the hidden-to-hidden weight matrix of the implicit model LinOSS-IM \eqref{eq:im_model}. We assume that $\bA_{kk} \geq 0$ for all diagonal elements $k=1,\dots,m$ of $\bA$, and further that $\Dt>0$. Then, the complex eigenvalues of $\bH^{\text{IM}}$ are given as, 
\begin{equation*}
    \lambda_j = \frac{1}{1+\Dt^2\bA_{kk}} + i (-1)^{\lceil\frac{j}{m}\rceil} \Dt \frac{\sqrt{\bA_{kk}}}{1+\Dt^2\bA_{kk}}, \quad\text{for all } j=1,\dots,2m,
\end{equation*}
with $k = j \text{ mod } m$.
Moreover, the spectral radius $\rho(\bH^{\text{IM}})$ is bounded by $1$, i.e., $|\lambda_j| 
\leq 1$ for all $j=1,\dots,2m$.
\end{proposition}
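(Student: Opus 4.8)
The plan is to exploit the diagonal structure of $\bA$ to reduce the $2m\times2m$ eigenvalue problem to $m$ decoupled $2\times2$ problems, compute those eigenvalues explicitly, and then bound their modulus directly. First I would observe that, since each of the four blocks of $\bH^{\text{IM}}$ in \eqref{eq:LOSS_IM_matrix} is a diagonal $m\times m$ matrix, the channels do not interact: after a permutation that interleaves the $k$-th coordinate of $\bz$ with the $k$-th coordinate of $\by$, the matrix $\bH^{\text{IM}}$ becomes block-diagonal with $m$ blocks of size $2\times2$. Writing $a=\bA_{kk}$ and $s_k=(1+\Dt^2 a)^{-1}$ for the $k$-th diagonal entry of $\bS$, the $k$-th block is
\begin{equation*}
\bH^{(k)} = \begin{bmatrix} s_k & -\Dt\, a\, s_k \\ \Dt\, s_k & s_k \end{bmatrix}.
\end{equation*}
Because a permutation similarity preserves the spectrum, the $2m$ eigenvalues of $\bH^{\text{IM}}$ are exactly the union over $k=1,\dots,m$ of the two eigenvalues of $\bH^{(k)}$.

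Next I would compute the characteristic polynomial of each block. A short calculation gives $\det(\bH^{(k)}-\lambda\ident) = (s_k-\lambda)^2 + \Dt^2 a\, s_k^2$, so the two roots are $\lambda = s_k \pm i\,\Dt\sqrt{a}\,s_k$; here the nonnegativity assumption $a=\bA_{kk}\ge0$ is precisely what guarantees $\sqrt{a}$ is real and hence that the roots form a genuine complex-conjugate pair. Substituting $s_k = 1/(1+\Dt^2\bA_{kk})$ recovers exactly the claimed real and imaginary parts, with the index $k=j\bmod m$ selecting the channel and the sign $(-1)^{\lceil j/m\rceil}$ distinguishing the two conjugate roots as $j$ ranges over $1,\dots,2m$.

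Finally I would bound the modulus by a direct computation:
\begin{equation*}
|\lambda_j|^2 = s_k^2\bigl(1+\Dt^2 a\bigr) = \frac{1+\Dt^2\bA_{kk}}{(1+\Dt^2\bA_{kk})^2} = \frac{1}{1+\Dt^2\bA_{kk}} \le 1,
\end{equation*}
where the last inequality uses $\Dt>0$ and $\bA_{kk}\ge0$, so that $1+\Dt^2\bA_{kk}\ge1$. This yields $|\lambda_j|\le1$ for every $j$ and therefore $\rho(\bH^{\text{IM}})\le1$, as claimed.

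I expect no serious analytic obstacle: the core is a single $2\times2$ eigenvalue computation repeated per channel. The only points requiring a little care are justifying the reduction to $2\times2$ blocks — i.e., making precise that the simultaneous diagonality of all four sub-blocks lets one coordinate permutation block-diagonalize $\bH^{\text{IM}}$ — and tracking the bookkeeping of the index map $k=j\bmod m$ together with the sign convention, so that the enumeration over $j=1,\dots,2m$ lists each conjugate pair exactly once.
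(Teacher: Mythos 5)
Your proof is correct and follows essentially the same route as the paper: both reduce the spectrum to the per-channel quadratic factors $(\bS_{kk}-\lambda)^2+\Dt^2\bA_{kk}\bS_{kk}^2$ and conclude with the identical modulus computation $|\lambda_j|^2=\bS_{kk}^2(1+\Dt^2\bA_{kk})=\bS_{kk}\le 1$. The only cosmetic difference is that you decouple via a permutation similarity into $m$ explicit $2\times2$ blocks, whereas the paper obtains the same factored characteristic polynomial directly by a block-elimination (Schur-complement) manipulation of the $2m\times 2m$ determinant.
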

\begin{proof}
\begin{align*}
\det(\bH^{\text{IM}}-\lambda\ident) &= 
\begin{vmatrix}
\bS -\lambda\ident & -\Dt\bA\bS \\
\Dt\bS & \bS -\lambda\ident
\end{vmatrix} = 
\begin{vmatrix}
\bS -\lambda\ident & -\Dt\bA\bS \\
 {\bf 0}& \bS -\lambda\ident + \Dt^2\bA\bS^2(\bS-\lambda\ident)^{-1}
\end{vmatrix} \\
&= \prod_{k=1}^m (\bS_{kk}-\lambda)\left(\bS_{kk}-\lambda+\frac{\Dt^2\bA_{kk}\bS_{kk}^2}{\bS_{kk}-\lambda}\right) = \prod_{k=1}^m [(\bS_{kk}-\lambda)^2+\Dt^2\bA_{kk}\bS_{kk}^2].
\end{align*}
Setting $k=j \text{ mod }m$, the eigenvalues of $\bH^{\text{IM}}$ are thus given as,
\begin{equation*}
    \lambda_j = \bS_{kk} + i (-1)^{\lceil\frac{j}{m}\rceil}\Dt \bS_{kk}\sqrt{\bA_{kk}}, \quad\text{for all } j=1,\dots,2m.
\end{equation*}
In particular, assuming $\bA_{kk}\geq 0$ for all $k=1,\dots,m$, the magnitude of the eigenvalues $\lambda_j$ are given as,
\[
|\lambda_j|^2 = \bS_{kk}^2 + \Dt^2\bS_{kk}^2\bA_{kk} = \bS_{kk}^2(1+\Dt^2\bA_{kk}) = \bS_{kk} \leq 1,
\]
for all $j=1,\dots,2m$, with $k=j \text{ mod }m$.
\end{proof}

This proof reveals important insights into our proposed LinOSS-IM model. First, the magnitude of the eigenvalues at initialization can be controlled by either specific initialization of $\bA$, or alternatively through a specific choice of the timestep parameter $\Dt$. Moreover, LinOSS-IM yields asymptotically stable dynamics for any choices of positive parameters $\bA$. This denotes a significant difference compared to previous first-order system, where the values of $\bA$ (and thus its eigenvalues) have to be heavily \rev{constrained} for the system to be stable. We argue that this flexibility in the parameterization of our model benefits the optimizer potentially leading to better performance in practice.

\begin{remark}
\label{remark:ev}
\rev{A straightforward adaptation of Proposition \ref{prop:ev_im} can also be derived for the implicit-explicit version of the model, LinOSS-IMEX \eqref{eq:imex_model}. The detailed proposition is given in Appendix Proposition \ref{prop:eigenspec_imex}. In particular, the analysis shows that all eigenvalues $\lambda_j$ of $\bH^{\text{IMEX}}$ in \eqref{eq:imex_model} satisfy $|\lambda_j| = 1$. This result underscores the key distinction between the two models: LinOSS-IM incorporates dissipative terms, whereas LinOSS-IMEX denotes a conservative system. Following the argument at the beginning of Section \ref{sec:stable_discussion}, one can interpret the dissipative terms in LinOSS-IM as forgetting mechanisms, which are considered crucial for expressive modeling of long sequences. This makes LinOSS-IM a more flexible model version compared to LinOSS-IMEX. Finally, we note that explicit discretization schemes lead to exploding hidden states returning NaN output values in practice during training. For this reason, we focus solely on stable methods in this context.}
\end{remark}

\paragraph{Initialization and parameterization of weights.}
Proposition \ref{prop:ev_im} and Remark \ref{remark:ev} reveal that both LinOSS models exhibit stable dynamics as long as the diagonal weights $\bA$ in \eqref{eq:de_loss} are non-negative. This condition can easily be fulfilled via many different parameterizations of the diagonal matrix $\bA$. An obvious choice is to square the diagonal values, i.e., a parameterization $\bA = \hat{\bA}\hat{\bA}$, with diagonal matrix $\hat{\bA} \in \R^{m \times m}$. Another straightforward approach is to apply the element-wise ReLU nonlinear activation function to $\bA$, i.e., $\bA = \relu(\hat{\bA})$, with $\relu(x) = \max(0,x)$ and $\hat{\bA}$ as before. The latter results in a LinOSS model where specific dimensions can be switched off completely by setting the corresponding weight in $\hat{\bA}$ to a negative value. Due to this flexibility, we decide to focus on the ReLU-parameterization of the diagonal weights $\bA$ in this manuscript.

After discussing the parameterization of $\bA$, the next step is to determine an appropriate method for initializing its weights prior to training. Since we already know from Remark \ref{remark:ev} that all absolute eigenvalues of the hidden matrix $\bM^{\text{IMEX}}$ of LinOSS-IMEX are exactly one, the specific values of $\bA$ are irrelevant for the model to learn long-range interactions. However, according to Proposition \ref{prop:ev_im}, $\bA$ highly influences the eigenvalues of the hidden matrix $\bM^{\text{IM}}$ of LinOSS-IM \eqref{eq:im_model}. As discussed in the previous section, high powers of the absolute eigenvalues of the hidden matrix are of particular interest when learning long-range interactions. To this end, we have the following Proposition concerning the expected powers of absolute eigenvalues for LinOSS-IM.
\begin{proposition}
\label{prop:expected_ev}
Let $\{\lambda_j\}_{j=1}^{2m}$ be the eigenspectrum of the hidden-to-hidden matrix $\bH^{\text{IM}}$ of the LinOSS-IM model \eqref{eq:im_model}. We further initialize $\bA_{kk} \sim \mathcal{U}([0,A_{\text{max}}])$ with $A_{\text{max}}>0$ for all diagonal elements $k=1,\dots,m$ of $\bA$ in \eqref{eq:aux_de_loss}. Then, the $N$-th moment of the magnitude of the eigenvalues are given as,
\begin{equation}
    \mathbb{E}(|\lambda_j|^N) = \frac{(\Dt^2A_{\text{max}}+1)^{1-\frac{N}{2}} - 1}{\Dt^2 A_{\text{max}}(1-\frac{N}{2})},
\end{equation}
for all $j=1,\dots,2m$, with $k=j \text{ mod }m$.
\end{proposition}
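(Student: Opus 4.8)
The plan is to reduce the claim to a one-dimensional integral against the uniform density and then evaluate it by an elementary substitution. The starting point is the magnitude computation already carried out in the proof of Proposition \ref{prop:ev_im}: there we found $|\lambda_j|^2 = \bS_{kk} = (1+\Dt^2\bA_{kk})^{-1}$, where $k = j \bmod m$ and $\bS = (\ident + \Dt^2\bA)^{-1}$. Taking the $N$-th power gives the clean expression $|\lambda_j|^N = (1+\Dt^2\bA_{kk})^{-N/2}$, so that the random quantity whose moment we seek is a deterministic function of the single scalar random variable $\bA_{kk}$.

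First I would observe that, because $|\lambda_j|^N$ depends on the randomness only through $\bA_{kk} \sim \mathcal{U}([0,A_{\text{max}}])$, the expectation is simply
\[
\mathbb{E}(|\lambda_j|^N) = \frac{1}{A_{\text{max}}}\int_0^{A_{\text{max}}} (1+\Dt^2 a)^{-N/2}\,da.
\]
Next I would substitute $u = 1+\Dt^2 a$ (so $du = \Dt^2\,da$, with limits $u=1$ and $u = 1+\Dt^2 A_{\text{max}}$), reducing the integral to $\tfrac{1}{\Dt^2}\int_1^{1+\Dt^2 A_{\text{max}}} u^{-N/2}\,du$. Evaluating the power-law antiderivative $u^{1-N/2}/(1-\tfrac{N}{2})$ at the two endpoints and collecting the prefactor $1/(A_{\text{max}}\Dt^2)$ yields exactly the stated formula
\[
\mathbb{E}(|\lambda_j|^N) = \frac{(\Dt^2 A_{\text{max}}+1)^{1-\frac{N}{2}}-1}{\Dt^2 A_{\text{max}}(1-\frac{N}{2})}.
\]

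There is essentially no hard step here; the computation is routine once the magnitude formula from Proposition \ref{prop:ev_im} is in hand, and the whole content is the reduction to a scalar expectation plus a single substitution. The only point requiring care is the special case $N=2$, where the antiderivative $u^{1-N/2}/(1-\tfrac N2)$ is singular and must be replaced by $\ln u$; there the integral evaluates to $\ln(1+\Dt^2 A_{\text{max}})/(\Dt^2 A_{\text{max}})$, which is precisely the $N\to 2$ limit of the displayed expression, so the formula remains valid when interpreted in the limiting sense. I would flag this explicitly so the statement is unambiguous for every positive integer $N$.
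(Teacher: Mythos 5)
Your proposal is correct and follows essentially the same route as the paper's proof: the identity $|\lambda_j|=\sqrt{\bS_{kk}}$ from Proposition \ref{prop:ev_im}, the law of the unconscious statistician, and the substitution $u=1+\Dt^2 x$ evaluated via the power-law antiderivative. Your explicit handling of the degenerate case $N=2$ (where the antiderivative becomes $\ln u$ and the formula holds in the limiting sense) is a small point of care that the paper's proof leaves implicit.
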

The proof, detailed in Appendix Section \ref{sec:proof_expected_ev}, is a straight-forward application of the law of the unconscious statistician. Proposition \ref{prop:expected_ev} demonstrates that while the expectation of $|\lambda_i|^N$ might be much smaller than $1$, it is still sufficiently large for practical use-cases. For instance, even considering $\Dt=A_{\text{max}}=1$ and an extreme sequence length of $N=100$k still yields $\mathbb{E}(|\lambda_j|^N)=1/49999\approx 2\times10^{-5}$. Based on this and the fact that the values of $\bA$ do not affect the eigenvalues for LinOSS-IMEX, we decide to initialize $\bA$ according to $\bA_{kk}\sim \mathcal{U}([0,1])$ for both LinOSS models, while setting $\Dt=1$. 

\subsection{Universality of LinOSS within continuous and causal operators}
While trainability is an important aspect of learning long-range interactions, it does not demonstrate why LinOSS is able to express complex mappings between general (i.e., not necessarily oscillatory) input and output sequences. Therefore, in this section we analyze the approximation power of our proposed LinOSS model. Following the recent work of \citet{neural_oscil} we show that LinOSS is universal within the class of continuous and causal operators between time-series. 
To this end, we consider a full LinOSS block, 
\begin{equation}
\label{eq:simple_loss}
    \bz(t) = \bW\sigma(\Tilde{\bW}\by(t)+\Tilde{\bb}),
\end{equation}
with weights $\bW \in \R^{q \times \Tilde{m}}$, $\Tilde\bW \in \R^{\Tilde{m} \times m}$, bias $\Tilde{\bb}\in \R^{\Tilde{m}}$, element-wise nonlinear activation function $\sigma$ (e.g., $\tanh$ or ReLU), and solution $\by(t)$ of the LinOSS differential equations \eqref{eq:aux_de_loss}.

Based on this, we are now interested in approximating operators $\Phi:C_0([0,T];\R^p) \rightarrow C_0([0,T];\R^q)$ with the full LinOSS model \eqref{eq:simple_loss}, where
\[
C_0([0,T];\R^p) := \set{\bu: [0,T] \to \R^p}{t\mapsto \bu(t) \text{ is continuous and } \bu(0) = \bf 0},
\]
\rev{i.e., operators between continuous time-varying functions with values in $\R^p$.}
As pointed out in \citet{neural_oscil}, the condition $\bu(0)=\bf 0$ is not restrictive and can directly be generalized to the case of any initial condition $\bu(0)=\bu_0 \in \R^p$ simply by introducing an arbitrarily small warm-up phase $[-t_0,0]$ with $t_0>0$ of the oscillators to synchronize with the input signal $\bu$. In addition to these function spaces, we pose the following conditions on the underlying operator $\Phi$,
\begin{enumerate}
  \item $\Phi$ is \emph{causal}, i.e., for any $t\in [0,T]$, if $\bu, \bB \in C_0([0,T];\R^p)$ are two input signals, such that $\bu|_{[0,t]} \equiv \bB|_{[0,t]}$, then $\Phi(\bu)(t) = \Phi(\bB)(t)$.
  \item $\Phi$ is \emph{continuous} as an operator \[
  \Phi: (C_0([0,T];\R^p), \Vert \slot \Vert_{L^\infty}) \to (C_0([0,T];\R^q), \Vert \slot \Vert_{L^\infty}),
  \]
  with respect to the $L^\infty$-norm on the input-/output-signals.
\end{enumerate}

\begin{theorem}
\label{thm:unvsl}
Let $\Phi: C_0([0,T];\R^p) \to C_0([0,T];\R^q)$ be a causal and continuous operator. Let $K \subset C_0([0,T];\R^p)$ be compact. Then for any $\epsilon > 0$, there exist hyperparameters $m$, $\Tilde{m}$, diagonal weight matrix $\bA \in \R^{m\times m}$, weights $\bB \in \R^{m\times d}$, $\Tilde{\bW}\in \R^{\Tilde{m}\times m}$, $\bW\in \R^{q\times \Tilde{m}}$ and bias vectors $\bb\in \R^{m}$, $\Tilde{\bb} \in \R^{\Tilde{m}}$, such that the output $\bz: [0,T]\to \R^q$ of the LinOSS block \eqref{eq:simple_loss} satisfies,
\[
\sup_{t\in [0,T]} | \Phi(\bu)(t) - \bz(t) | \le \epsilon, \quad \forall \, \bu\in K.
\]
\end{theorem}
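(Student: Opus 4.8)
The plan is to read \eqref{eq:simple_loss} as a reservoir-style architecture: the linear oscillator bank \eqref{eq:aux_de_loss} is a feature extractor that is \emph{linear} in $\bu$, while the readout $\by(t)\mapsto \bW\sigma(\Tilde{\bW}\by(t)+\Tilde{\bb})$ supplies all the nonlinearity. Universality would then follow from three ingredients, combined by a triangle inequality with the total error $\epsilon$ split across the stages: (i) causality and compactness reduce $\Phi$ to a continuous function of finitely many local-in-time measurements of $\bu$ together with the time variable $t$; (ii) the oscillators compute a rich enough family of such measurements; and (iii) the readout is a classical universal function approximator.

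For the reduction I would use causality to write, for each fixed $t$, $\Phi(\bu)(t)=\Psi_t(\bu|_{[0,t]})$ with $\Psi_t$ continuous in $\Vert\slot\Vert_{L^\infty}$. Since $K$ is compact in $(C_0([0,T];\R^p),\Vert\slot\Vert_{L^\infty})$, Arzel\`a--Ascoli gives uniform equicontinuity, so every $\bu\in K$ is approximated uniformly in $t$ by an interpolant of its values on a fine grid $0=s_0<\dots<s_M=T$. Composing with $\Psi_t$ (uniformly continuous on the compact $K$), I can replace $\Phi$ by an operator whose output at time $t$ is a continuous function of $t$ and of the finitely many samples $\{\bu(s_i):s_i\le t\}$, uniformly in $t$ and over $K$.

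The crux is the feature-richness step. Solving the decoupled scalar ODEs by variation of parameters, each mode satisfies
\[ \by_k(t)=\int_0^t \frac{\sin(\sqrt{\bA_{kk}}(t-s))}{\sqrt{\bA_{kk}}}\Big[(\bB\bu(s))_k+\bb_k\Big]\,ds. \]
Two families of features are then available. Choosing $\bb_k\neq 0$ with $\bB_{k,:}=0$ produces pure ``clock'' features proportional to $1-\cos(\sqrt{\bA_{kk}}\,t)$, which over a spread of frequencies $\sqrt{\bA_{kk}}$ can represent any continuous function of $t$ and so let the readout recover the time variable. Choosing $\bb_k=0$ with a suitable row $\bB_{k,:}$ produces windowed sine transforms of $\bu$; since $\{\sin(\omega\tau)\}$ spans $L^2[0,T]$, a sufficiently large frequency bank approximately determines each grid sample from $\by(t)$. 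I expect the main obstacle to lie exactly here: I must show that a \emph{single finite} bank of frequencies yields a feature map $\bu\mapsto\by(t)$ that is injective enough, uniformly in $t\in[0,T]$ and over $K$, to encode both the clock value and the finitely many samples of the previous step simultaneously --- i.e.\ uniform-in-$t$ reconstruction from a fixed short-time sinusoidal transform, with control of how it interacts with the discretization grid.

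Granting this, the target from the reduction becomes a continuous function on the compact image $\{\by(t):t\in[0,T],\,\bu\in K\}\subset\R^m$, and the classical universal approximation theorem for single-hidden-layer networks with activation $\sigma$ supplies $\bW,\Tilde{\bW},\Tilde{\bb}$ and width $\Tilde{m}$ so that $\bz(t)=\bW\sigma(\Tilde{\bW}\by(t)+\Tilde{\bb})$ matches it to the remaining budget. Summing the stage errors and taking $\sup_{t,\,\bu\in K}$ yields the claim. I would fix the scales in order --- grid first, then the frequency bank and oscillator count $m$, then the readout width $\Tilde{m}$ --- so the hyperparameters are chosen sequentially without circularity.
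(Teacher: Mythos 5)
Your overall architecture is exactly the paper's: the oscillator bank realizes time-windowed sine transforms of the input, an extra coordinate supplies a clock, and a single-hidden-layer readout approximates the resulting finite-dimensional continuous map, with the error split by a triangle inequality. The difference lies at the step you yourself flag as the crux, and there your proposal has a genuine gap: you never establish that a \emph{fixed finite} bank of frequencies determines, uniformly in $t\in[0,T]$ and over $K$, the information required by your reduced operator. Appealing to completeness of $\{\sin(\omega\tau)\}$ in $L^2[0,T]$ does not do this: at time $t$ the available features are transforms over the moving window $[0,t]$, namely $\cL_t\bu(A)=\int_0^t\bu(t-\tau)\sin(A\tau)\,d\tau$, so the family of linear functionals varies with $t$; approximate recovery of pointwise samples $\bu(s_i)$ from finitely many such integral functionals needs a quantitative equicontinuity argument over $K$; and your grid reduction has the extra wrinkle that the set of active samples $\{s_i\le t\}$ changes dimension with $t$, which your "continuous function of the samples" formulation glosses over. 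The paper does not prove this reconstruction from scratch: it invokes the fundamental lemma of \citet{neural_oscil}, which asserts precisely that for causal continuous $\Phi$ and compact $K$ there exist $N$, frequencies $A_1,\dots,A_N$, and a continuous $\Psi:\R^{p\times N}\times[0,T^2/4]\to\R^q$ with $|\Phi(\bu)(t)-\Psi(\cL_t\bu(A_1),\dots,\cL_t\bu(A_N);t^2/4)|\le\epsilon$ uniformly. Citing (or fully reproving) that lemma is what closes your "granting this" step; without it the argument is incomplete.

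Two smaller points. First, the ingredients you do supply match the paper's execution: variation of parameters gives $\by(t)=A^{-1}\int_0^t\bu(\tau)\sin(A(t-\tau))\,d\tau$, and a diagonal $\bA$ with blocks $A_1,\dots,A_N$ together with $\bB=[\ident_p,\dots,\ident_p,0]^\top$ makes $\by(t)$ carry exactly the arguments of $\Psi$, after which the classical universal approximation theorem handles the readout, as in your stage (iii). Second, for the clock the paper uses a cleaner device than your cosine features: a zero-frequency mode ($\bA_{kk}=0$) with bias $\bb_k=1/2$ and no forcing integrates to $t^2/4$, which is injective on $[0,T]$ and is exactly the last argument of $\Psi$ in the cited lemma; your features proportional to $1-\cos(\sqrt{\bA_{kk}}\,t)$ serve as a clock only if the frequency is chosen small enough to be injective on $[0,T]$ (one such mode suffices; no spread of frequencies is needed, and no extra argument that cosines "represent any continuous function of $t$" is required once the readout network is in place).
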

The proof can be found in Appendix Section \ref{sec:proof_thm_unvsl}. The main idea of the proof is to encode the infinite-dimensional operator $\Phi$ with a finite-dimensional operator that makes use of the structure of the LinOSS ODE system \eqref{eq:de_loss}, and that can further be expressed by a (finite-dimensional) function. \rev{This theorem rigorously shows that LinOSS can approximate any causal and continuous operator between continuous time-varying functions with values in $\R^p$ to any desired accuracy.}

\section{Empirical Results}
\label{sec:exps}
In this section, we empirically test the performance of our proposed LinOSS models on a variety of challenging real-world sequential datasets, ranging from scientific datasets in genomics to practical applications in medicine. We ensure thereby a fair comparison to other state-of-the-art sequence models such as Mamba, LRU, and S5.

\subsection{Learning Long-Range Interactions}
\label{sec:lri}
In the first part of the experiments, we focus on a recently proposed long-range sequential benchmark introduced in \citet{walker2024log}. This benchmark focuses on six datasets from the University of East Anglia (UEA) Multivariate Time Series Classification Archive (UEA-MTSCA) \citep{uea}, selecting those with the longest sequences for increased difficulty. The sequence lengths range thereby from $400$ to almost $18$k. We compare our proposed LinOSS models to recent state-of-the-art sequence models, including state-space models such as Mamba and LRU. \Tref{tab:uea} shows the test accuracies averaged over five random model initialization and dataset splits of all six datasets for LinOSS as well as competing methods. We note that all other results are taken from \citet{walker2024log}. Moreover, we highlight that we exactly follow the training procedure described in \citet{walker2024log} in order to ensure a fair comparison against competing models. More concretely, we use the same pre-selected random seeds for splitting the datasets into training, validation, and testing parts (using $70/15/15$ splits), as well as tune our model hyperparameters only on the same pre-described grid. In fact, since LinOSS does not possess any model specific hyperparameters -- unlike competing state-space models such as LRU or S5 -- our search grid is lower-dimensional compared to the other models considered. As a result, the hyperparameter tuning process involves significantly fewer model instances.
\begin{table}[h!]
\vspace{-0.1cm}
\caption{Test accuracies averaged over $5$ training runs on \rev{UEA} time-series classification datasets. All models are trained based on the same hyper-parameter tuning protocol in order to ensure fair comparability. The dataset names are abbreviations of the following UEA time-series datasets: EigenWorms (Worms), SelfRegulationSCP1 (SCP1), SelfRegulationSCP2 (SCP2), EthanolConcentration (Ethanol), Heartbeat, MotorImagery (Motor). The three best performing methods are highlighted in {\bf \textcolor{red}{red}}$^1$ (First), {\bf \textcolor{blue}{blue}}$^2$ (Second), and {\bf \textcolor{violet}{violet}}$^3$ (Third).}
\label{tab:uea}
\centering
\resizebox{\textwidth}{!}{
\begin{tabular}{lcccccc|c}
\toprule
&  \textbf{Worms} & \textbf{SCP1} & \textbf{SCP2} & \textbf{Ethanol} & \textbf{Heartbeat} & \textbf{Motor} & \textbf{Avg} \\
Seq. length &
17,984 &
896 & 
1,152 &
1,751 & 
405 &
3,000 & \\
\#Classes  &
5 &
2 & 
2 &
4 & 
2 &
2 & \\
\midrule
NRDE & 
$83.9 \pm 7.3$ & 
$80.9 \pm 2.5$ &
${\bf \color{violet}{53.7 \pm 6.9}}^3$ &
$25.3 \pm 1.8$ &
$72.9 \pm 4.8$ &
$47.0 \pm 5.7$ &
$60.6$  \\

NCDE & 
$75.0 \pm 3.9$ & 
$79.8 \pm 5.6$ &
$53.0 \pm 2.8$ &
${\bf \color{blue}{29.9 \pm 6.5}}^2$ &
$73.9 \pm 2.6$ &
$49.5 \pm 2.8$ &
$60.2$\\

Log-NCDE & 
${\bf \color{violet}{85.6 \pm 5.1}}^3$ & 
$83.1 \pm 2.8$ &
${\bf \color{violet}{53.7 \pm 4.1}}^3$ &
${\bf \color{red}{34.4 \pm 6.4}}^1$ &
$75.2 \pm 4.6$ &
${\bf \color{violet}{53.7 \pm 5.3}}^3$ &
${\bf \color{violet}{64.3}}^3$\\

LRU & 
${\bf \color{blue}{87.8 \pm 2.8}}^2$ & 
$82.6 \pm 3.4$ &
$51.2 \pm 3.6 $ &
$21.5 \pm 2.1$ &
${\bf \color{red}{78.4 \pm 6.7}}^1$ & 
$48.4 \pm 5.0$ &
$61.7$ \\

S5 & 
$81.1 \pm 3.7$  & 
${\bf \color{red}{89.9 \pm 4.6}}^1$ &
$50.5 \pm 2.6$ &
$24.1 \pm 4.3$ &
${\bf \color{blue}{77.7 \pm 5.5}}^2$ & 
$47.7 \pm 5.5$ &
$61.8$ \\

S6 & 
$85.0\pm 16.1$ & 
$82.8 \pm 2.7$ &
$49.9 \pm 9.4$ &
$26.4 \pm 6.4$  &
${\bf \color{violet}{76.5 \pm 8.3}}^3$ &
$51.3 \pm 4.7$ &
$62.0$\\

Mamba & 
$70.9 \pm 15.8$ & 
$80.7 \pm 1.4$ &
$48.2 \pm 3.9$ &
${\bf \color{violet}{27.9 \pm 4.5}}^3$ &
$76.2 \pm 3.8$ &
$47.7 \pm 4.5$ &
$58.6$\\

\midrule

\textbf{LinOSS-IMEX} & 
$80.0 \pm 2.7$ &
${\bf \color{violet}{87.5 \pm 4.0}}^3$ &
${\bf \color{red}{58.9 \pm 8.1}}^1$ &
${\bf \color{blue}{29.9 \pm 1.0}}^2$ &
$75.5 \pm 4.3$ &
${\bf \color{blue}{57.9 \pm 5.3}}^2$ & 
${\bf \color{blue}{65.0}}^2$\\

\textbf{LinOSS-IM} & 
${\bf \color{red}{95.0 \pm 4.4}}^1$ & 
${\bf \color{blue}{87.8\pm 2.6}}^2$ &
${\bf \color{blue}{58.2 \pm 6.9}}^2$&
${\bf \color{blue}{29.9 \pm 0.6}}^2$&
$75.8 \pm 3.7$ & 
${\bf \color{red}{60.0 \pm 7.5}}^1$ &
${\bf \color{red}{67.8}}^1$\\
\bottomrule
\end{tabular}
}
\vspace{-0.2cm}
\end{table}

We can see in \Tref{tab:uea} that on average both LinOSS models outperform any other model we consider here by reaching an average (over all six datasets) accuracy of $65.0\%$ for LinOSS-IMEX and $67.8\%$ for LinOSS-IM. In particular, the average accuracy of LinOSS-IM is significantly higher than the two next best models, i.e., Log-NCDE reaching an average accuracy of $64.3\%$, and S6 reaching an accuracy of $62.0\%$ on average. It is particularly noteworthy that LinOSS-IM yields state-of-the-art results on the two datasets with the longest sequences, namely EigenWorms and MotorImagery. 

\subsection{Very long-range interactions}
\label{sec:ppg}
In this experiment, we test the performance of LinOSS in the case of very long-range interactions. To this end, we consider the PPG-DaLiA dataset, a multivariate time series regression dataset designed for heart rate prediction using data collected from a wrist-worn device \citep{reiss2019deep}. It includes recordings from fifteen individuals, each with approximately 150 minutes of data sampled at a maximum rate of 128 Hz. The dataset consists of six channels: blood volume pulse, electrodermal activity, body temperature, and three-axis acceleration. We follow \citet{walker2024log} and divide the data into training, validation, and test sets with a 70/15/15 split for each individual. After splitting the data, a sliding window of length 49920 and step size 4992 is applied. As in previous experiments, we apply the exact same hyperparameter tuning protocol to each model we consider here to ensure fair comparison. The test mean-squared error (MSE) of both LinOSS models as well as other competing models are shown in \Tref{tab:ppg}. We can see that both LinOSS models significantly outperform all other models. In particular, LinOSS-IM outperforms Mamba and LRU by nearly a factor of $2$. This highlights the effectiveness of our proposed LinOSS models on sequential data with extreme length.
\begin{table}[ht]
\vspace{-0.3cm}
\caption{Average test mean-squared error over $5$ training runs on the PPG-DaLiA dataset. All models are trained following the same hyper-parameter tuning protocol in order to ensure fair comparability. The three best performing methods are highlighted in {\bf \textcolor{red}{red}}$^1$ (First), {\bf \textcolor{blue}{blue}}$^2$ (Second), and {\bf \textcolor{violet}{violet}}$^3$ (Third).
}
\label{tab:ppg}
\centering
\begin{tabular}{lc}
\toprule
\cmidrule(r){1-2}
 Model &  MSE $\times 10^{-2}$ \\
\midrule
NRDE \citep{nrde} & $9.90 \pm 0.97$ \\
NCDE \citep{kidger2020neural} & $13.54 \pm 0.69$ \\
Log-NCDE \citep{walker2024log}& ${\bf \color{violet}{9.56 \pm 0.59}}^3$ \\
LRU \citep{lru}& $12.17 \pm 0.49$\\
S5 \citep{s5}& $12.63 \pm 1.25$  \\
S6 \citep{mamba}& $12.88 \pm 2.05$ \\
Mamba \citep{mamba} & $10.65 \pm 2.20$ \\
\textbf{LinOSS-IMEX} & ${\bf \color{blue}{7.5 \pm 0.46}}^2$ \\
\textbf{LinOSS-IM} & ${\bf \color{red}{6.4 \pm 0.23}}^1$ \\
\bottomrule
\end{tabular}
\vspace{-0.1cm}
\end{table}

\subsection{Long-horizon forecasting}
Inspired by \citet{s4}, we test our proposed LinOSS on its ability to serve as a general sequence-to-sequence model, even with weak inductive bias. To this end, we focus on time-series forecasting which typically requires specialized domain-specific preprocessing and architectures. We do, however, not alter our LinOSS model nor incorporate any inductive biases. Thus, we simply follow \citet{s4} by setting up LinOSS as a general sequence-to-sequence model that treats forecasting as a masked sequence-to-sequence transformation. We consider a weather prediction task introduced in \citet{informer}. In this task, several climate variables are predicted into the future based on local climatological data. Here, we focus on the hardest task in \citet{informer} of predicting the future 720 timesteps (hours) based on the past 720 timesteps. \Tref{tab:wth} shows the mean absolute error for both LinOSS models as well as other competing models. We can see that both LinOSS models outperform Transformers-based baselines as well as the other state-space models.
\begin{table}[h]
\vspace{-0.3cm}
    \centering
    \caption{Mean absolute error on the weather dataset predicting the future 720 time steps based on the past 720 time steps. The three best performing methods are highlighted in {\bf \textcolor{red}{red}}$^1$ (First), {\bf \textcolor{blue}{blue}}$^2$ (Second), and {\bf \textcolor{violet}{violet}}$^3$ (Third).
    }
\label{tab:wth}
\centering
\begin{tabular}{lc}
\toprule
 Model & Mean Absolute Error \\
\midrule
Informer \citep{informer}  &$0.731$ \\
LogTrans \citep{logtrans} & $0.773$\\
Reformer \citep{reformer} &  $1.575$ \\
LSTMa \citep{LSTMa} &  $1.109$ \\
LSTnet \citep{LSTnet} &  $ 0.757$ \\

S4 \citep{s4}  & ${\bf \color{violet}{0.578}}^3$ \\
\textbf{LinOSS-IMEX} &  ${\bf \color{red}{0.508}}^1$ \\
\textbf{LinOSS-IM} &  ${\bf \color{blue}{0.528}}^2$ \\
\bottomrule
\end{tabular}
\vspace{-0.2cm}
\end{table} 

\subsection{\rev{Additional experiments and ablations}}
\rev{Our proposed LinOSS model is the result of several design choices, such as the state matrix parameterization, state matrix initialization, and the numerical value of the discretization timestep $\Dt$. In this section, we empirically analyze how these choices affect the performance of LinOSS by providing ablations and sensitivity studies.}

\rev{
We start by evaluating the performance of LinOSS under different parameterizations and initializations of the state matrix $\bA$. Specifically, we examine LinOSS-IM within the experimental framework described in Section \ref{sec:lri}. For this analysis, we parameterize $\bA$ using one of the two approaches proposed in Section \ref{sec:stable}: $\bA=\tilde{\bA}\tilde{\bA}$ or $\bA=\relu(\tilde{\bA})$, where $\tilde{\bA}\in \R^{m\times m}$ is a diagonal matrix. The results, presented in Appendix \Tref{tab:uea_app} show that the ReLU parameterization leads to slightly better performance on average over all six datasets. However, the squared parameterization yields better performance on three out of six datasets. Thus, including different state matrix parameterization choices in the hyperparameter optimization can help achieve improved performance. We further test the performance of LinOSS-IM using a standard normal random initialization of the state matrix instead of the uniform initialization. The results of the standard normal initialization of the state matrix are shown in Appendix \Tref{tab:uea_app}. We can see that this initialization yields similar performance to a uniform initialization on almost all considered datasets, except the EigenWorms dataset, where it obtains a lower mean test accuracy and a much higher standard deviation. This indicates that the performance of LinOSS-IM with a standard normal initialization for the state matrix is highly sensitive to the random seed used during initialization. This sensitivity arises because normal distributions do not have bounded support, allowing for the possibility of large matrix entries. These large entries result in small eigenvalues, which in turn lead to vanishing gradients.}

\rev{
Another natural question in this context concerns the sensitivity of performance to variations in the timestep $\Dt$ used in the underlying discretization scheme. To investigate this, we train several LinOSS-IM models using different values of $\Dt$ spanning three orders of magnitude, i.e., ranging from $10^{-3}$ to $1$. The average test error, along with the standard deviation, is plotted for three different datasets in Appendix \ref{sec:dt_sensitivity}. We can see that while the choice of $\Dt$ does influence performance, the variations are not substantial.
}

\rev{
Finally, we empirically analyze the roles of dissipation and conservation in LinOSS-IM and LinOSS-IMEX. As rigorously demonstrated in Section \ref{sec:stable}, LinOSS-IM introduces dissipative terms, whereas LinOSS-IMEX denotes a conservative system. While our earlier experiments examined the differences between these models using real-world data, we now focus on their performance for predicting energy-conserving dynamical systems to highlight their contrasting behaviors. To this end, we simulate the energy-conserving simple harmonic motion with various initial positions and velocities. The test error over time for both models is plotted in Appendix \fref{fig:harm_osc_imex}. The results show that the error in LinOSS-IM grows over time, whereas the error in LinOSS-IMEX remains constant. Notably, by the end of the time interval, LinOSS-IMEX outperforms LinOSS-IM by a factor of more than 8. This stark contrast underscores the fundamental difference between the two models: LinOSS-IM introduces dissipative terms, making it more effective for dissipative systems, while LinOSS-IMEX, being fully conservative, excels in energy-conserving systems.
}

\section{Conclusion}
\vspace{-0.1cm}
In this paper, we introduce LinOSS, a state-space model based on harmonic oscillators. We rigorously show that LinOSS produces stable dynamics and is able to learn long-range interactions only requiring a nonnegative diagonal state matrix. In addition, we connect the underlying ODE system of the LinOSS model to Hamiltonian dynamics, which, discretized using symplectic integrators, perfectly preserves its symmetry of time reversibility. Moreover, we show that LinOSS is a universal approximator of continuous and causal operators between time-series. Together, these properties enable efficient modeling of long-range interactions, while ensuring stable and accurate long-horizon forecasting. Finally, we demonstrate that LinOSS outperforms state-of-the-art state-space models, such as Mamba, LRU, and S5.

\subsubsection*{Acknowledgments}
The authors would like to thank Dr. Alaa Maalouf (MIT) and Dr. T. Anderson Keller (Harvard University) for their insightful feedback and constructive suggestions on an earlier version of the manuscript. This work was supported in part by the Postdoc.Mobility grant P500PT-217915 from the Swiss National Science Foundation, the Schmidt AI2050 program (grant G-22-63172), and the Department of the Air Force Artificial Intelligence Accelerator and was accomplished under Cooperative Agreement Number FA8750-19-2-1000. The views and conclusions contained in this document are those of the authors and should not be interpreted as representing the official policies, either expressed or implied, of the Department of the Air Force or the U.S. Government. The U.S. Government is authorized to reproduce and distribute reprints for Government purposes notwithstanding any copyright notation herein.


\bibliography{refs}
\bibliographystyle{iclr2025_conference}

\newpage
\appendix
\begin{center}
{\bf Supplementary Material for:}\\
Oscillatory State-Space Models
\end{center}

\section{\rev{Full architecture details}}
\label{sec:arch_details}
\rev{In this section, we outline the detailed architecture of a full multi-layer LinOSS model. To this end, the full LinOSS model starts by encoding an input sequence $\bu=[\bu_1,\bu_2,\dots,\bu_N]$ with $\bu_i \in \R^q$ for all $i=1,\dots,N$ via an affine transformation. After that, several blocks are applied consisting of a LinOSS layer (i.e., solving \eqref{eq:de_loss} with either IM or IMEX associative parallel scans) directly followed by a nonlinear layer using the Gaussian error linear unit activation function (GELU) \citep{gelu}, the Gated Linear Unit (GLU) \citep{glu}, i.e., $\glu(\bx) = \sigmoid(\bW_1\bx)\circ \bW_2\bx$, where $\bW_{1,2}$ are learnable weight matrices, and a skip connection. Finally, the output of the final LinOSS block gets decoded by an affine transformation. The full LinOSS model is further presented in Algorithm \ref{alg:linoss_detailed}. Note that weight matrices and bias vectors are applied parallel in time whenever possible (i.e., outside the recurrence). We are thus omitting subscript $i$ in Algorithm \ref{alg:linoss_detailed}.}

\begin{algorithm}
\rev{
\caption{\rev{Full LinOSS model}}\label{alg:linoss_detailed}
\begin{algorithmic}
\Require Input sequence $\bu$
\Ensure $L$-block LinOSS output sequence $\bo$
\State $\bu^0 \gets \bW_{\enc}\bu+\bb_{\enc}$ \Comment{Encode input sequence}
\For{$l=1,\dots,L$}
\State $\by^l \gets$ solution of ODE in \eqref{eq:de_loss} with input $\bu^{l-1}$ via parallel scan 
\State $\bx^l \gets \bC\by^l+\bD\bu^{l-1}$ \Comment{Linear readout in \eqref{eq:de_loss}}
\State $\bx^l \gets \gelu(\bx^l)$
\State $\bu^{l} \gets \glu(\bx^l) + \bu^{l-1}$
\EndFor
\State $\bo \gets \bW_{\dec}\by^L+\bb_{\dec}$ \Comment{Decode final LinOSS block output}
\end{algorithmic}
}
\end{algorithm}

\subsection{\rev{Sequence-to-sequence LinOSS architecture for time-series forecasting}}
\rev{While in sequence regression and sequence classification tasks we simply take the final output of the full LinOSS model at final time $T$ as the model prediction, we have to slightly adapt our architecture to handle time-series forecasting problems. To this end, we follow common practice for state-space models suggested in \citet{s4} and generate train, validation, and test sequences of length $L_1+L_2$, where $L_1$ is the number of the past sequence entries used for forecasting and $L_2$ are the number of future steps we aim to predict. Note that for the input sequences, we simply mask out the last $L_2$ entries of the sequence (i.e., with zero entries). Moreover, for the LinOSS output sequence, we only use the final $L_2$ entries for the future predictions.}

\section{Training details}
\rev{The code to run the experiments is implemented using the JAX auto-differentiation framework \citep{jax2018github}.}
All experiments were conducted on Nvidia Tesla V100 GPUs and Nvidia RTX 4090 GPUs, with the exception of the PPG experiment, which was run on Nvidia Tesla A100 GPUs due to higher memory demands. 

\subsection{Hyperparameters}
\rev{The hyperparameters of the models were optimized with the same grid search approach from \citet{walker2024log} for the six datasets in Section \ref{sec:lri} and the PPG dataset of Section \ref{sec:ppg} to ensure perfect comparability with competing methods, i.e., using the grid: learning rate $=\{0.00001,0.0001,0.001\}$, number of layers $=\{2,4,6\}$, number of hidden neurons $=\{16,64,128\}$, state-space dimension $=\{16,64,256\}$, include time dimension $\{\text{True},\text{False}\}$. Note that for the weather dataset, we performed a random search instead of grid search using the same hyperparameter bounds as before, except that we increased the maximum number of LinOSS blocks from $6$ to $8$. The hyperparameters yielding the best performance can be seen in \Tref{tab:hps} for both LinOSS-IM and LinOSS-IMEX for each experiment in the main paper.}

\begin{table}[h!]
\caption{Best hyperparameters}
\label{tab:hps}
\centering
\begin{tabular}{llccccc}
\toprule
 & Model & lr & hidden dim & state dim & \#\rev{blocks} & include time  \\
\midrule
\multirow{2}{*}{\textbf{Worms}} & IM & $0.001$ & $128$ & $64$ & $2$ & True \\
                      & IMEX & $0.0001$ & $64$ & $16$ & $2$ & False \\
\midrule
\multirow{2}{*}{\textbf{SCP1}}  & IM & $0.0001$ & $128$ & $256$ & $6$ & True \\
                      & IMEX & $0.0001$ & $64$ & $256$ & $6$ & False \\
\midrule   
\multirow{2}{*}{\textbf{SCP2}}  & IM & $0.0001$ & $128$ & $64$ & $6$ & False \\
                      & IMEX & $0.00001$ & $64$ & $256$ & $6$ & True \\
\midrule
\multirow{2}{*}{\textbf{Ethanol}} & IM & $0.00001$ & $16$ & $16$ & $4$ & False \\
                        & IMEX & $0.00001$ & $16$ & $256$ & $4$ & False \\
\midrule
\multirow{2}{*}{\textbf{Heartbeat}} & IM & $0.001$ & $16$ & $16$ & $6$ & True \\
                          & IMEX & $0.00001$ & $64$ & $16$ & $2$ & True \\
\midrule
\multirow{2}{*}{\textbf{Motor}}   & IM & $0.00001$ & $128$ & $16$ & $2$ & False \\
                        & IMEX & $0.0001$ & $16$ & $256$ & $6$ & True \\
\midrule
\multirow{2}{*}{\textbf{PPG}}     & IM & $0.001$ & $16$ & $64$ & $6$ & True \\
                        & IMEX & $0.0001$ & $64$ & $16$ & $2$ & True \\
\midrule
\multirow{2}{*}{\rev{\textbf{Weather}}}     & \rev{IM} & \rev{$0.0006$} & \rev{$64$} & \rev{$32$} & \rev{$8$} & \rev{True} \\
                        & \rev{IMEX} & \rev{$0.0007$} & \rev{$256$} & \rev{$128$} & \rev{$5$} & \rev{False} \\

\bottomrule
\end{tabular}
\end{table}

\subsection{\rev{Memory requirements, runtimes, and number of parameters}}
\rev{In this section, we present the number of parameters, GPU memory usage (in MB), and run time (in seconds) for every considered model on all datasets from Section \ref{sec:lri}. The GPU memory and run time results for the other models are taken from \citet{walker2024log}. Note that we used exactly the same GPU architecture as well as the same code and python libraries as in \citet{walker2024log} to ensure fair comparability, i.e., GPU memory usage and run time was measured on an Nvidia RTX 4090 GPU for all models. The run time was measured as the average run time for 1000 training steps. \Tref{tab:memory_speed} comprehensively shows the number of parameters, GPU memory usage, and run time for all models. Note that we further follow \citet{walker2024log} and report these results for the best performing model identified during the previously described hyperparameter optimization process. Both LinOSS models exhibit comparable GPU memory usage and run time performance to other state-space models. Notably, LinOSS achieves the fastest runtime on two out of six datasets and ranks as the second fastest on another two datasets.
}

\begin{table}[h!]
\color{black}
\caption{Number of parameters, GPU memory usage (in MB) and run time (in seconds) for every considered model on all long-range datasets from Section \ref{sec:lri}.}
\label{tab:memory_speed}
\centering
\resizebox{\textwidth}{!}{
\begin{tabular}{lcccccccccc}
\toprule
\textbf{} & & {NRDE} & {NCDE} & {Log-NCDE} & {LRU} & {S5} & {Mamba} & {S6} & {LinOSS-IMEX} & {LinOSS-IM}\\
\midrule
\multirow{3}{*}{\textbf{Worms}} 
& \#parameters & 105110 & 166789 & 37977 & 101129 & 22007 & 27381 & 15045 & 26119 & 134279\\
& GPU memory (MB) & 2506 & 2484 & 2510 & 10716 & 6646 & 13486 & 7922 & 6556 & 10654\\
& run time (s) & 5386 & 24595 & 1956 & 94 & 31 & 122 & 68 & 37 & 90\\
\midrule
\multirow{3}{*}{\textbf{SCP1}} 
& \#parameters & 117187 & 166274 & 91557 & 25892 & 226328 & 184194 & 24898 & 447944 & 991240\\
& GPU memory (MB) & 716 & 694 & 724 & 960 & 1798 & 1110 & 904 & 4768 & 4772\\
& run time (s) & 1014 & 973 & 635 & 9 & 17 & 7 & 3 & 42 & 38\\
\midrule
\multirow{3}{*}{\textbf{SCP2}} 
& \#parameters & 200707 & 182914 & 36379 & 26020 & 5652 & 356290 & 26018 & 448072 & 399112\\
& GPU memory (MB) & 712 & 692 & 714 & 954 & 762 & 2460 & 1222 & 4772 & 2724 \\
& run time (s) & 1404 & 1251 &583 & 9 & 9 & 32 & 7 & 55 & 22\\
\midrule
\multirow{3}{*}{\textbf{Ethanol}} 
& \#parameters & 93212 & 133252 & 31452 & 76522 & 76214 & 1032772 & 5780 & 70088 & 6728\\
& GPU memory (MB) & 712 & 692 & 710 & 1988 & 1520 & 4876 & 938 & 4766 & 1182\\
& run time (s) &  2256 & 2217 & 2056 & 16 & 9 & 255 & 4 & 48 & 8\\
\midrule
\multirow{3}{*}{\textbf{Heartbeat}} 
& \#parameters & 15657742 & 1098114 & 168320 & 338820 & 158310 & 1034242 & 6674 & 29444 & 10936\\
& GPU memory (MB) & 6860 & 1462 & 2774 & 1466 & 1548 & 1650 & 606  & 922 & 928 \\
& run time (s) & 9539 & 1177 & 826 & 8 & 11 & 34 & 4 & 4 & 7 \\
\midrule
\multirow{3}{*}{\textbf{Motor}} 
& \#parameters & 1134395 & 186962 & 81391 & 107544 & 17496 & 228226 & 52802 & 106024 & 91844 \\
& GPU memory (MB) & 4552 & 4534 & 4566 & 8646 & 4616 & 3120 & 4056 & 12708 & 4510\\
& run time (s) & 7616 & 3778 & 730 & 51 & 16 & 35 & 34 & 128 & 11 \\
\bottomrule
\end{tabular}
}
\end{table}
\color{black}

\section{Additional experiments}
\subsection{On the state matrix parameterization \rev{and initialization}}
In the main paper, we have focused on parameterizing the state matrix $\bA$ in \eqref{eq:de_loss} according to $\bA = \relu(\hat{\bA})$, with diagonal matrix $\hat{\bA} \in \R^{m\times m}$. This was due to the fact that LinoSS requires the state matrix to be nonnegative in order to produce stable dynamics. However, another viable parameterization choice would be $\bA = \hat{\bA}\hat{\bA}$. In this section, we test how the squared parameterization influences the performance of LinOSS on six long-range datasets taken from Section \ref{sec:lri} of the main paper. \Tref{tab:uea_app} shows the average test accuracies of LinOSS-IM using a ReLU parameterization as well as using a squared parameterization together with the same baselines taken from the main paper. We can see that on average, the ReLU parameterization performs better. However, since the squared parameterization performs better on SCP2, Ethanol and Heartbeat (i.e., on three out of six datasets), we conclude that including the two parameterization choices in the hyperparameter optimization process will lead to even better performance.

\begin{table}[h!]
\caption{Test accuracies averaged over $5$ training runs on UEA time-series classification datasets. All models are trained based on the same hyper-parameter tuning protocol in order to ensure fair comparability. The dataset names are abbreviations of the following UEA time-series datasets: EigenWorms (Worms), SelfRegulationSCP1 (SCP1), SelfRegulationSCP2 (SCP2), EthanolConcentration (Ethanol), Heartbeat, MotorImagery (Motor).}
\label{tab:uea_app}
\centering
\resizebox{\textwidth}{!}{
\begin{tabular}{lcccccc|c}
\toprule
&  \textbf{Worms} & \textbf{SCP1} & \textbf{SCP2} & \textbf{Ethanol} & \textbf{Heartbeat} & \textbf{Motor} & \textbf{Avg} \\
Seq. length &
17,984 &
896 & 
1,152 &
1,751 & 
405 &
3,000 & \\
\#Classes  &
5 &
2 & 
2 &
4 & 
2 &
2 & \\
\midrule
NRDE & 
$83.9 \pm 7.3$ & 
$80.9 \pm 2.5$ &
$53.7 \pm 6.9$ &
$25.3 \pm 1.8$ &
$72.9 \pm 4.8$ &
$47.0 \pm 5.7$ &
$60.6$  \\

NCDE & 
$75.0 \pm 3.9$ & 
$79.8 \pm 5.6$ &
$53.0 \pm 2.8$ &
$29.9 \pm 6.5$ &
$73.9 \pm 2.6$ &
$49.5 \pm 2.8$ &
$60.2$\\

Log-NCDE & 
$85.6 \pm 5.1$ & 
$83.1 \pm 2.8$ &
$53.7 \pm 4.1$ &
$34.4 \pm 6.4$ &
$75.2 \pm 4.6$ &
$53.7 \pm 5.3$ &
$64.3$\\

LRU & 
$87.8 \pm 2.8$ & 
$82.6 \pm 3.4$ &
$51.2 \pm 3.6 $ &
$21.5 \pm 2.1$ &
$78.4 \pm 6.7$ & 
$48.4 \pm 5.0$ &
$61.7$ \\

S5 & 
$81.1 \pm 3.7$  & 
$89.9 \pm 4.6$ &
$50.5 \pm 2.6$ &
$24.1 \pm 4.3$ &
$77.7 \pm 5.5$ & 
$47.7 \pm 5.5$ &
$61.8$ \\

S6 & 
$85.0 \pm 16.1$ & 
$82.8 \pm 2.7$ &
$49.9 \pm 9.4$ &
$26.4 \pm 6.4$  &
$76.5 \pm 8.3$ &
$51.3 \pm 4.7$ &
$62.0$\\

Mamba & 
$70.9 \pm 15.8$ & 
$80.7 \pm 1.4$ &
$48.2 \pm 3.9$ &
$27.9 \pm 4.5$ &
$76.2 \pm 3.8$ &
$47.7 \pm 4.5$ &
$58.6$\\

\midrule

\textbf{LinOSS-IM} (ReLU) & 
$95.0 \pm 4.4$ & 
$87.8\pm 2.6$ &
$58.2 \pm 6.9$&
$29.9 \pm 0.6$&
$75.8 \pm 3.7$ & 
$60.0 \pm 7.5$ &
$67.8$\\

\textbf{LinOSS-IM} (squared) & 
$88.9 \pm 2.5$ & 
$86.6\pm 1.8$ &
$59.3 \pm 7.8$&
$32.7 \pm 6.2$&
$76.8 \pm 2.2$ & 
$58.2 \pm 8.4$ &
$67.1$\\

\rev{\textbf{LinOSS-IM} (squared + Gaussian init) }&
\rev{$74.4 \pm 31.3$} &
\rev{$87.3 \pm 1.7$} &
\rev{$60.7 \pm 4.1$}&
\rev{$31.4 \pm 4.8$} &
\rev{$73.9 \pm 4.0$} &
\rev{$56.5 \pm 3.0$} &
\rev{$64.0$} \\
\bottomrule
\end{tabular}
}
\end{table}

\rev{Another important question arises in the context of the state matrix initialization. While we argue in the main paper that initializing the state matrix using a simple random uniform distribution in $[0,1]$ leads to models that are able to learn long-range interactions, we are interested in exploring other choices in this context. To this end, we train LinOSS-IM models and initialize their state matrix using a standard normal distribution. Note that we need to use the squared parameterization in this case, as ReLU would lead to switching off approximately half of the dimensions. The results for the six datasets from Section \ref{sec:lri} of the main paper are shown in \Tref{tab:uea_app}. We can see that initializing the state matrix using standard normal distribution leads to competitive results on almost all datasets except EigenWorms. Note that the standard deviation is very high on this dataset, suggesting that the performance is very sensitive to the random seed of the initialization. Therefore, the subpar performance on EigenWorms can be explained by the possibility of this initialization to produce large matrix entries that lead to small eigenvalues and thus vanishing gradients.
}

\subsection{\rev{Dissipative vs conservative LinOSS models}}
\rev{In this section, we empirically analyze the dissipative behavior of LinOSS-IM and compare it to the conservative behavior of LinOSS-IMEX. To this end, we aim to predict an energy-conserving dynamical system. More concretely, we train our two LinOSS models to predict simple harmonic motion for different initial positions and velocities, i.e., the solution of,
\begin{equation}
\label{eq:shm}
\begin{aligned}
y^{\prime\prime}(t) &= y(t), \\
y(0) &= A, \quad y^{\prime}(0) = B,
\end{aligned}
\end{equation}
for $A,B\in [0,1]$. We construct train, validation, and test sets by solving \eqref{eq:shm} for uniform randomly chosen $A,B$, with $2000$ train sequences, $500$ validation sequences, and $500$ test sequences. We set the stepsize for solving $y(t)$ to $\Dt=0.1$ and predict $y(t)$ for 1000 steps, i.e., for the time interval $[0,100]$. Clearly, \eqref{eq:shm} is energy-conserving with the Hamiltonian given as $H(y,y^{\prime})=y^2+{y^{\prime}}^2$. }

\rev{
Since state-space models are sequence-to-sequence models, we follow common practice and construct the input sequences as two-dimensional sequences of length $1000$, where all entries of the first dimension are set to $A$ and the second dimension is set to $B$. The resulting test mean-squared error (MSE) is shown in \fref{fig:harm_osc_imex} for each point in time. We can see that LinOSS-IMEX keeps the error constant, while the error for LinOSS-IM grows over time. This can be explained by the dissipative nature of LinOSS-IM, which forces the predicted trajectories to slowly converge to a steady-state, i.e., LinOSS-IM would go to zero in the asymptotic case of $t\rightarrow\infty$.}
\begin{figure}[h!]
\includegraphics[width=\textwidth]{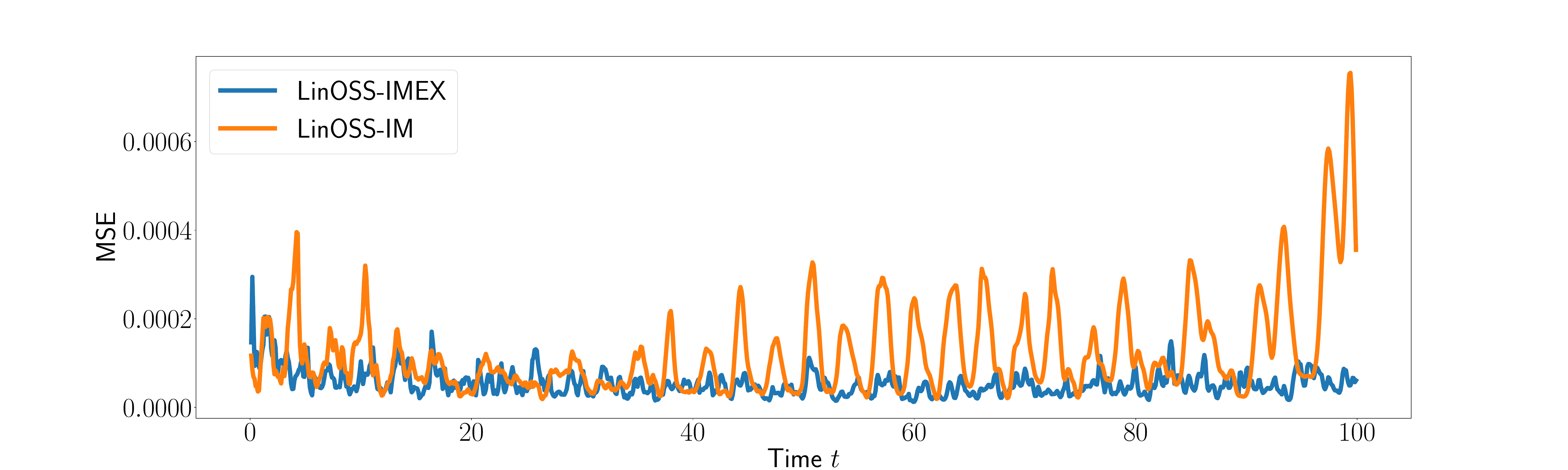}
\caption{\rev{Mean squared error over time of LinOSS-IMEX and LinOSS-IM on predicting simple harmonic motion for different initial positions and velocities.}}
\label{fig:harm_osc_imex}
\end{figure}

\subsection{\rev{On the sensitivity of $\Dt$ in LinOSS}}
\rev{While we set the timestep $\Dt$ of the underlying time integration schemes to $\Dt=1$ for all our experiments in this paper, it is natural to ask whether different choices of $\Dt$ will lead to different performance. To analyze this, we train LinOSS-IM models on three datasets from Section \ref{sec:lri} of the main paper, i.e., SelfRegulationSCP1, Heartbeat, and MotorImagery dataset. We further vary $\Dt$ between $10^{-3}$ and $1$, i.e., spanning three orders of magnitude. We plot the average test accuracy (with standard deviation) in \fref{fig:dt_sensitivity} for all three datasets. From this, we can conclude that while the choice of $\Dt$ does influence performance, the variations are not substantial.}
\label{sec:dt_sensitivity}
\begin{figure}[h!]
\includegraphics[width=\textwidth]{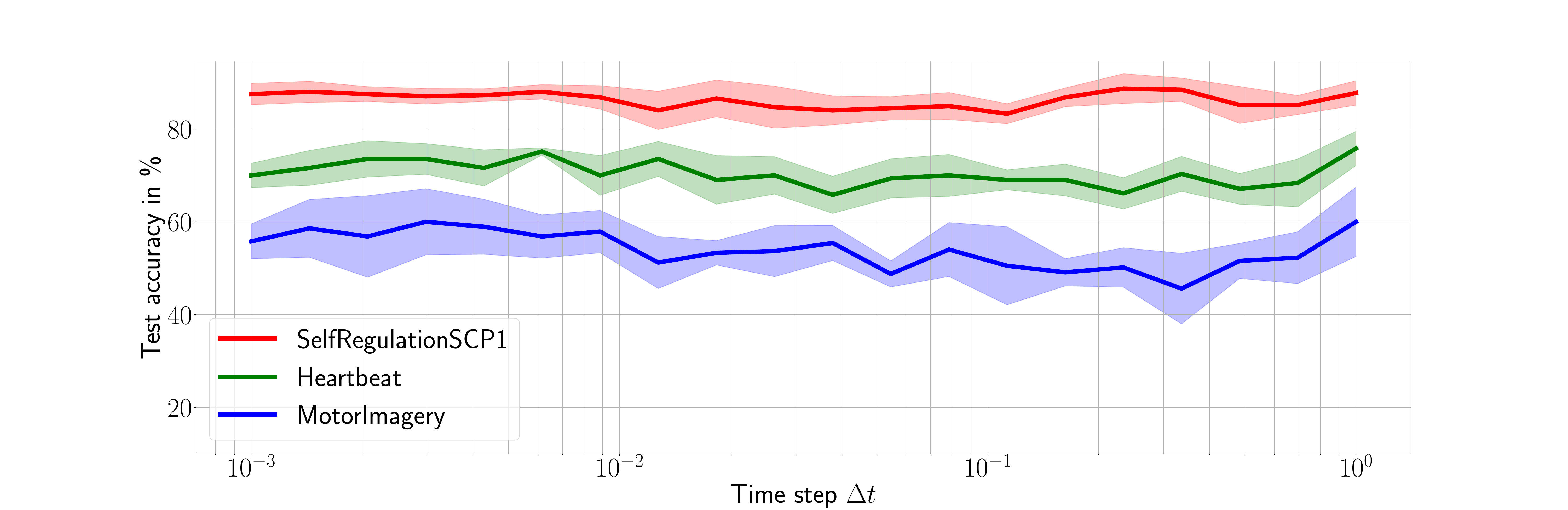}
\caption{\rev{Test accuracies (mean and standard deviation over five different seeds) of LinOSS-IM on three different datasets from Section \ref{sec:exps} of the main paper, i.e., SelfRegulationSCP1, Heartbeat, and MotorImagery, for varying values of the timestep $\Dt$ of the underlying implicit time integration scheme of LinOSS \eqref{eq:im_model}.}}
\label{fig:dt_sensitivity}
\end{figure}

\section{\rev{Higher-order integration schemes}}
\label{sec:ho_disc}
\rev{As outlined in the main text, higher-order discretization schemes can be readily applied in the context of oscillatory state-space models. Here, we provide an example on how to leverage the second-order velocity Verlet method as the underlying discretization method of LinOSS. To this end, applying the velocity Verlet integration scheme to our underlying system of ODEs \eqref{eq:aux_de_loss} yields,
\begin{equation*}
\label{eq:vv_disc}
\begin{aligned}
\by_{n} &= \by_{n-1}+ \Dt\bz_{n-1} + \frac{\Dt^2}{2}(-\bA\by_{n-1}+\bB\bu_n), \\
\bz_{n} &= \bz_{n-1} + \frac{\Dt}{2}(-\bA\by_n+\bB\bu_{n+1} -\bA\by_{n-1}+\bB\bu_n),\\
\end{aligned}
\end{equation*}
This can be rewritten in matrix form as,
\begin{equation}
\label{eq:vv_disc_matrix_form}
    \bx_n = \bH^{\text{VV}}\bx_{n-1} + \bF^{\text{VV}}_n,
\end{equation}
with $\bx_n = [\by_n,\bz_n]^\top$, and,
\[
\bH^{\text{VV}} = 
\begin{bmatrix}
    \ident - \frac{\Dt^2}{2}\bA & \Dt \ident \\
    -\Dt\bA(\ident-\frac{\Dt^2}{4}\bA) & \ident-\frac{\Dt^2}{2}\bA
\end{bmatrix}, \quad \bF^{\text{VV}}_n=
\begin{bmatrix}
\frac{\Dt^2}{2}\bB\bu_n \\
-\frac{\Dt^3}{4}\bA\bB\bu_n+\frac{\Dt}{2}\bB(\bu_{n+1}+\bu_n)
\end{bmatrix}.
\]
Equation \eqref{eq:vv_disc_matrix_form} can be efficiently solved using fast associative parallel scans, as described in Section \ref{sec:scans}, leading to an alternative model architecture we refer to as LinOSS-VV. The key distinction from the symplectic LinOSS-IMEX lies in the discretization order: LinOSS-VV employs a second-order scheme, yielding more accurate approximations of the underlying ODE system compared to the first-order IMEX approach. However, this comes at the cost of greater computational complexity. We aim to explore the role of higher-order discretization schemes within the LinOSS framework in future research.}

\section{Supplement to the theoretical insights}

\subsection{\rev{Eigenspectrum of LinOSS-IMEX}}
\label{sec:eigenspec_imex}
\begin{proposition}
\label{prop:eigenspec_imex}
\rev{Let $\bH^{\text{IMEX}} \in \R^{m\times m}$ be the hidden-to-hidden weight matrix of the implicit-explicit model LinOSS-IMEX \eqref{eq:imex_model}. We assume that $\bA_{kk} > 0$ for all diagonal elements $k=1,\dots,m$ of $\bA$, and further that $0<\Dt\leq\max\limits_{k=1,\dots,m}(\frac{2}{\sqrt{\bA_{kk}}})$. Then, the eigenvalues of $\bH^{\text{IMEX}}$ are given as, 
\begin{equation*}
    \lambda_j = \frac{1}{2}(2-\Dt^2\bA_{kk}) + i(-1)^{\lceil\frac{j}{m}\rceil} \frac{1}{2}\sqrt{\Dt^2\bA_{kk}(4-\Dt^2\bA_{kk})}, \quad\text{for all } j=1,\dots,2m,
\end{equation*}
with $k = j \text{ mod } m$.
Moreover, all absolute eigenvalues of $\bH^{\text{IMEX}}$ are exactly $1$, i.e., $|\lambda_j| = 1$ for all $j=1,\dots,2m$.}
\end{proposition}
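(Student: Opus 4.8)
The plan is to mirror the proof of Proposition \ref{prop:ev_im} and exploit the block structure of $\bH^{\text{IMEX}}$. Because $\bA$ is diagonal, the $2m\times 2m$ matrix $\bH^{\text{IMEX}}$ decouples, after a permutation of coordinates, into $m$ independent $2\times 2$ blocks, one for each diagonal entry $\bA_{kk}$. Equivalently, one can compute $\det(\bH^{\text{IMEX}}-\lambda\ident)$ directly by the same Schur-complement factorization used before, which turns the block determinant into a product over $k$ of scalar quadratics. Either route reduces the whole spectral question to analyzing the single $2\times2$ matrix
\[
M_k = \begin{bmatrix} 1 & -\Dt\bA_{kk} \\ \Dt & 1 - \Dt^2\bA_{kk}\end{bmatrix}.
\]

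Next I would compute the characteristic polynomial of $M_k$. Writing $a := \Dt^2\bA_{kk}$, a short expansion gives
\[
\det(M_k - \lambda\ident) = (1-\lambda)(1 - a - \lambda) + a = \lambda^2 + (a-2)\lambda + 1.
\]
Solving this quadratic, the two roots are $\tfrac{1}{2}(2-a)\pm\tfrac{1}{2}\sqrt{(a-2)^2-4}$, and since $(a-2)^2-4 = a(a-4)$, the discriminant equals $\Dt^2\bA_{kk}(\Dt^2\bA_{kk}-4)$. Under the hypothesis $\Dt^2\bA_{kk}\le 4$ (which is what the bound on $\Dt$ is meant to guarantee for every $k$), this quantity is nonpositive, so the two roots form a complex-conjugate pair
\[
\lambda_j = \tfrac{1}{2}(2-\Dt^2\bA_{kk}) \pm \tfrac{i}{2}\sqrt{\Dt^2\bA_{kk}(4-\Dt^2\bA_{kk})},
\]
matching the claimed formula with the sign fixed by $(-1)^{\lceil j/m\rceil}$.

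Finally, for the magnitude I would avoid grinding out $|\lambda_j|^2$ by hand and instead note that the monic characteristic polynomial $\lambda^2 + (a-2)\lambda + 1$ has constant term equal to the product of its roots. Since the two roots are complex conjugates, that product is $|\lambda_j|^2$, which equals $1$, giving $|\lambda_j| = 1$ at once; a direct check $\left(\tfrac{2-a}{2}\right)^2 + \tfrac{a(4-a)}{4} = 1$ confirms it.

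The algebra here is routine; the only real subtlety is the role of the timestep restriction. The discriminant analysis shows that complex eigenvalues of unit modulus arise precisely when $\Dt^2\bA_{kk}\le 4$ holds for \emph{every} $k$, i.e. when $\Dt\le 2/\sqrt{\bA_{kk}}$ simultaneously for all $k$, equivalently $\Dt\le\min_k 2/\sqrt{\bA_{kk}}$. I would therefore take care to state and use this bound per coordinate, since for $\Dt^2\bA_{kk}>4$ the roots become real with product $1$, so one exceeds unit modulus and the conclusion $|\lambda_j|=1$ fails for that block. This is the one place where I expect the argument to need attention rather than the computation itself.
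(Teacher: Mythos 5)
Your proof is correct, and for the eigenvalue formula it is essentially the paper's argument: the paper's proof reduces $\det(\bH^{\text{IMEX}}-\lambda\ident)$ to a product of scalar quadratics by ``following the same procedure'' as in Proposition~\ref{prop:ev_im}, which is exactly your per-block reduction; your quadratic $\lambda^2+(\Dt^2\bA_{kk}-2)\lambda+1$ and its roots reproduce the stated formula. Where you genuinely diverge is the modulus step: the paper splits into the two cases $\Dt^2\bA_{kk}=4$ and $\Dt^2\bA_{kk}<4$ and expands $|\lambda_j|^2$ by direct computation, whereas you read $|\lambda_j|^2$ off as the constant term of the monic quadratic (the product of a conjugate pair). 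Your route is shorter and uniform; the only point to note is that at $\Dt^2\bA_{kk}=4$ the ``pair'' degenerates to the real double root $-1$, where the product-of-roots argument still gives $|\lambda_j|=1$, so no case split is needed.

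Your closing observation is a genuine catch rather than a quibble. The stated hypothesis $0<\Dt\leq\max_{k}\bigl(2/\sqrt{\bA_{kk}}\bigr)$ only guarantees $\Dt^2\,\min_k \bA_{kk}\leq 4$, so when the diagonal entries of $\bA$ are not all equal, some block can have $\Dt^2\bA_{kk}>4$; there the roots are real with product $1$, one of them has modulus strictly greater than $1$, and the displayed eigenvalue formula itself breaks down (the square root becomes imaginary). The paper's proof tacitly assumes $\Dt^2\bA_{kk}\leq 4$ for \emph{every} $k$ --- it considers only those two cases --- which is exactly the per-coordinate bound you insist on: the hypothesis should read $\Dt\leq\min_{k}\bigl(2/\sqrt{\bA_{kk}}\bigr)$, equivalently $\Dt\leq 2/\sqrt{\max_k \bA_{kk}}$; the ``max'' in the statement is evidently a typo for ``min'', and your proof is the corrected version.
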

\begin{proof}
\rev{
Following the same procedure outlined in the proof of Proposition \ref{prop:ev_im}, the eigenvalues of $\bH^{\text{IMEX}}$ are given as,
\begin{equation*}
    \lambda_j = \frac{1}{2}(2-\Dt^2\bA_{kk}) + i(-1)^{\lceil\frac{j}{m}\rceil} \frac{1}{2}\sqrt{\Dt^2\bA_{kk}(4-\Dt^2\bA_{kk})}, \quad\text{for all } j=1,\dots,2m,
\end{equation*}
with $k = j \text{ mod } m$. To calculate their absolute value, we must consider two distinct cases.
\begin{enumerate}
    \item \underline{$\Dt^2\bA_{kk}=4$}: it follows directly that $|\lambda_j|^2 = (\frac{1}{2}(-4+2))^2 = 1$.
    \item \underline{$\Dt^2\bA_{kk}<4$}: With this assumption, we can compute the absolute value of $\lambda_j$ as,
    \begin{align*}
        |\lambda_j|^2 &= \left(\frac{2-\Dt^2\bA_{kk}}{2}\right)^2 + \frac{\Dt^2}{4}\bA_{kk}(4-\Dt^2\bA_{kk}) \\
        &= \frac{4 - 4\Dt^2\bA_{kk}+\Dt^4\bA_{kk}^2}{4} + \frac{4\Dt^2}{4}\bA_{kk}-\frac{\Dt^4\bA_{kk}^2}{4}\\ &= 1.
    \end{align*}
\end{enumerate}
}
\end{proof}

\subsection{Proof of proposition \ref{prop:expected_ev}}
\label{sec:proof_expected_ev}
\begin{proposition*}
Let $\{\lambda_j\}_{j=1}^{2m}$ be the eigenspectrum of the hidden-to-hidden matrix $\bH^{\text{IM}}$ of the LinOSS-IM model \eqref{eq:im_model}. We further initialize $\bA_{kk} \sim \mathcal{U}([0,A_{\text{max}}])$ with $A_{\text{max}}>0$ for all diagonal elements $k=1,\dots,m$ of $\bA$ in \eqref{eq:aux_de_loss}. Then, the $N$-th moment of the magnitude of the eigenvalues are given as,
\begin{equation}
    \mathbb{E}(|\lambda_j|^N) = \frac{(\Dt^2A_{\text{max}}+1)^{1-\frac{N}{2}} - 1}{\Dt^2 A_{\text{max}}(1-\frac{N}{2})},
\end{equation}
for all $j=1,\dots,2m$, with $k=j \text{ mod }m$.
\end{proposition*}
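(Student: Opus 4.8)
The plan is to reduce the computation to a single elementary integral, leaning entirely on the eigenvalue magnitudes already established. First I would recall from Proposition \ref{prop:ev_im} that, under the standing assumption $\bA_{kk}\ge 0$, the magnitude of each eigenvalue of $\bH^{\text{IM}}$ depends only on the corresponding diagonal entry of $\bA$ through
\[
|\lambda_j|^2 = \bS_{kk} = \frac{1}{1+\Dt^2\bA_{kk}}, \qquad k = j \bmod m,
\]
so that $|\lambda_j| = (1+\Dt^2\bA_{kk})^{-1/2}$ and hence $|\lambda_j|^N = (1+\Dt^2\bA_{kk})^{-N/2}$. Because each diagonal entry $\bA_{kk}$ is drawn identically from $\mathcal{U}([0,A_{\text{max}}])$, the $N$-th moment is the same for every index $j$, and the claim reduces to evaluating one scalar expectation in the random variable $\bA_{kk}$.

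Second, I would invoke the law of the unconscious statistician. Writing $a$ for a generic uniform variable on $[0,A_{\text{max}}]$ with density $1/A_{\text{max}}$, the moment becomes
\[
\mathbb{E}\bigl(|\lambda_j|^N\bigr) = \frac{1}{A_{\text{max}}}\int_0^{A_{\text{max}}} (1+\Dt^2 a)^{-N/2}\,da.
\]
The substitution $u = 1+\Dt^2 a$ (so $du = \Dt^2\,da$) transforms this into $\tfrac{1}{A_{\text{max}}\Dt^2}\int_1^{1+\Dt^2 A_{\text{max}}} u^{-N/2}\,du$, whose antiderivative is $u^{1-N/2}/(1-N/2)$. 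Evaluating between the limits $u=1$ and $u=1+\Dt^2 A_{\text{max}}$ and simplifying gives
\[
\mathbb{E}\bigl(|\lambda_j|^N\bigr) = \frac{(\Dt^2 A_{\text{max}}+1)^{1-\frac{N}{2}}-1}{\Dt^2 A_{\text{max}}\bigl(1-\frac{N}{2}\bigr)},
\]
which is exactly the claimed closed form.

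There is no genuine analytic obstacle here; the computation is routine once Proposition \ref{prop:ev_im} supplies the clean expression for $|\lambda_j|$, and the substantive content of the result lives in that earlier proposition rather than in the integration. The only point warranting care is the implicit assumption $N\ne 2$: the antiderivative $u^{1-N/2}/(1-N/2)$, and correspondingly the stated formula, degenerates at $N=2$, where the integral instead produces a logarithm, namely $\log(1+\Dt^2 A_{\text{max}})/(\Dt^2 A_{\text{max}})$. I would flag this exceptional value explicitly but otherwise treat the derivation as a direct application of \emph{LOTUS} followed by a one-line integration.
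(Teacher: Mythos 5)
Your proposal is correct and follows essentially the same route as the paper's proof: both invoke the identity $|\lambda_j|^2 = \bS_{kk} = (1+\Dt^2\bA_{kk})^{-1}$ from the proof of Proposition \ref{prop:ev_im}, apply the law of the unconscious statistician, and evaluate the resulting integral via the substitution $u = 1+\Dt^2 a$. Your explicit flagging of the degenerate case $N=2$, where the antiderivative becomes $\log(1+\Dt^2 A_{\text{max}})/(\Dt^2 A_{\text{max}})$, is a small but genuine refinement that the paper's proof leaves implicit.
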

\begin{proof}
By the law of the unconscious statistician together with the identity $|\lambda_j| = \sqrt{\bS_k}$ from the proof of Proposition \ref{prop:ev_im} it follows that, 
\begin{align*}
\mathbb{E}(|\lambda_j|^N) &= \frac{1}{A_{\text{max}}}\int_{0}^{A_{\text{max}}} (1+\Dt^2x)^{-\frac{N}{2}} dx = \frac{1}{\Dt^2 A_{\text{max}}} \int_{1}^{\Dt^2 A_{\text{max}}+1}u^{-\frac{N}{2}} du \\
&= \frac{(\Dt^2A_{\text{max}}+1)^{1-\frac{N}{2}} - 1}{\Dt^2 A_{\text{max}}(1-\frac{N}{2})},
\end{align*}
where we substituted $u=\Dt^2x+1$.
\end{proof}

\subsection{Proof of Theorem \ref{thm:unvsl}}
\label{sec:proof_thm_unvsl}
\begin{theorem*}
Let $\Phi: C_0([0,T];\R^p) \to C_0([0,T];\R^q)$ be a causal and continuous operator. Let $K \subset C_0([0,T];\R^p)$ be compact. Then for any $\epsilon > 0$, there exist hyperparameters $m$, $\Tilde{m}$, diagonal weight matrix $\bA \in \R^{m\times m}$, weights $\bB \in \R^{m\times d}$, $\Tilde{\bW}\in \R^{\Tilde{m}\times m}$, $\bW\in \R^{q\times \Tilde{m}}$ and bias vectors $\bb\in \R^{m}$, $\Tilde{\bb} \in \R^{\Tilde{m}}$, such that the output $\bz: [0,T]\to \R^q$ of the LinOSS model \eqref{eq:simple_loss} satisfies,
\[
\sup_{t\in [0,T]} | \Phi(\bu)(t) - \bz(t) | \le \epsilon, \quad \forall \, \bu\in K.
\]
\end{theorem*}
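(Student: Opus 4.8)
The plan is to reduce the infinite-dimensional approximation problem to a finite-dimensional one, following the strategy of \citet{neural_oscil}, by splitting the construction into an \emph{encoding} carried out by the oscillatory hidden state $\by(t)$ and a \emph{decoding} carried out by the static nonlinear readout in \eqref{eq:simple_loss}. First I would exploit the compactness of $K$: by the Arzel\`a--Ascoli theorem the inputs $\bu\in K$ are uniformly bounded and uniformly equicontinuous on $[0,T]$, and $\Phi$ is uniformly continuous on $K$. Combined with causality, this lets me approximate $\Phi(\bu)(t)$ to within $\epsilon/3$ by a functional depending only on a finite-dimensional ``fingerprint'' of the history $\bu|_{[0,t]}$ — for instance its values on a sufficiently fine time grid, restricted to $[0,t]$.

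The core step is the encoding lemma. Taking $\bA = \diag(\omega_1^2,\dots,\omega_m^2)$ with zero initial data, each component of the solution of \eqref{eq:aux_de_loss} is the windowed sine transform $y_k(t) = \omega_k^{-1}\int_0^t \sin(\omega_k(t-s))(\bB\bu(s))_k\,ds$, so the hidden state $\by(t)$ is a bank of time-localized oscillatory features of the input history. I would show that for a suitably large choice of distinct frequencies $\{\omega_k\}$ and coupling $\bB$, the induced map from the (compact set of) fingerprints to the feature vector $\by(t)$ is injective, using that the sine kernels at distinct frequencies are linearly independent and approximately complete; including a few unforced oscillators supplies $\cos(\omega t)$-type components that encode the evaluation time $t$ itself. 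The target functional from the previous step then factors through a single \emph{continuous} function $f$ of $\by(t)$, continuity on the compact image following from the Tietze extension theorem.

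Finally I would invoke the classical universal approximation theorem for single-hidden-layer networks with activation $\sigma$: since $f$ is continuous on a compact set containing the range of $\by(\slot)$ over $K\times[0,T]$, there exist $\Tilde{\bW}$, $\Tilde{\bb}$, $\bW$ so that $\bW\sigma(\Tilde{\bW}\by(t)+\Tilde{\bb})$ approximates $f(\by(t))$ to within $\epsilon/3$, uniformly. Chaining the three $\epsilon/3$ estimates, each uniform in $t\in[0,T]$ and $\bu\in K$, yields the claimed sup-norm bound.

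I expect the encoding lemma to be the main obstacle. The delicate points are (i) guaranteeing that \emph{finitely many} oscillator features separate histories to the required resolution while keeping the decoding function $f$ continuous, and (ii) ensuring that a single set of readout weights works at every time simultaneously — which forces the feature vector to encode both the history and the evaluation time $t$, and forces every intermediate estimate to be uniform in $t$. The harmonic structure of \eqref{eq:de_loss}, in particular the linear independence and approximate completeness of the sine kernels, is precisely what makes (i) tractable, while the clock-like unforced modes address (ii).
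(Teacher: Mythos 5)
Your overall route coincides with the paper's: both arguments rest on the observation that the LinOSS hidden state with $\bA = \diag(A_1^2,\dots)$ and zero initial data computes windowed sine transforms $\cL_t\bu(A_k)=\int_0^t \bu(t-\tau)\sin(A_k\tau)\,d\tau$, append one dedicated mode encoding the evaluation time, and finish by approximating a continuous finite-dimensional map of these features with a one-hidden-layer network (classical universal approximation), chaining the errors by the triangle inequality. The structural difference is where the finite-dimensional encoding comes from: the paper does not prove it, but invokes the fundamental lemma of \citet{neural_oscil}, which asserts outright that for some $N$ and frequencies $A_1,\dots,A_N$ there is a continuous $\Psi$ with $|\Phi(\bu)(t)-\Psi(\cL_t\bu(A_1),\dots,\cL_t\bu(A_N);t^2/4)|\le\epsilon$ uniformly on $K\times[0,T]$. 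You propose to re-derive this lemma from scratch (Arzel\`a--Ascoli fingerprints, injectivity of the feature map, Tietze extension); that is legitimate in principle, but it is precisely the hard part, and your sketch of it contains two concrete problems.

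First, the clock mechanism: an \emph{unforced} oscillator with the zero initial data used throughout is identically zero, so it supplies no $\cos(\omega t)$ component. The time signal must come from the bias $\bb$, which is exactly why the theorem statement carries a bias: the paper sets the last mode to zero frequency with bias entry $1/2$, so $y''=1/2$ integrates to $t^2/4$ (a biased mode with frequency $A$ would give $(b/A^2)(1-\cos(At))$, which would also serve). Second, your encoding step is mis-factored: the features $\cL_t\bu(A_k)$ are functionals of the \emph{full} history $\bu|_{[0,t]}$, not functions of your grid-value fingerprint, so ``injectivity of the map from fingerprints to $\by(t)$'' is not a well-posed intermediate statement. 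What is actually required is the converse stability property: from the finitely many sine features one must be able to \emph{approximately reconstruct} the history, uniformly over $K$ and uniformly in $t$, so that inputs with nearly equal features have nearly equal values under $\Phi$. That quantitative approximate-completeness claim --- not mere injectivity on a compact set plus Tietze --- is the substance of the cited lemma, and unless you prove it (or cite it as the paper does), the middle step of your argument is a genuine gap.
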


\begin{proof}
We begin by considering the simple forced harmonic oscillator,
\begin{equation}
\label{eq:param_model_eq}
    \by^{\prime\prime}(t) = -A^2\by(t) +\bu(t),
\end{equation}
where $\bu(t) \in \R^p$ is an external forcing and we assume $A\neq 0$. We further introduce the time-windowed sine transform for the input signal $\bu(t)$,
\begin{equation*}
    \cL_t \bu(A) = \int_{0}^t \bu(t-\tau) \sin(A\tau) d\tau.
\end{equation*}
Then, a straightforward calculation shows that the solution $\by(t)$ to \eqref{eq:param_model_eq} computes (up to a constant) a time-windowed sine transform, i.e., 
\begin{equation}
\label{eq:sol_ode}
    y(t) = A^{-1}\int_{0}^t \bu(\tau) \sin(A(t-\tau)) d\tau.
\end{equation} 
This can easily be verified by differentiating $\by$, 
\begin{align*}
\by^\prime(t) &= \int_{0}^t \bu(\tau) \cos(A(t-\tau)) d\tau + A^{-1}[\bu(\tau) \sin(A(t-\tau))]_{\tau=t} \\&= \int_{0}^t \bu(\tau) \cos(A(t-\tau)) d\tau.
\end{align*}
Differentiating one more time yields,
\begin{align*}
    \by^{\prime\prime}(t) &= -A \int_{0}^t \bu(\tau) \sin(A(t-\tau)) d\tau + [\bu(\tau) \cos(A(t-\tau))]_{\tau=t} \\&= -A \int_{0}^t \bu(\tau) \sin(A(t-\tau)) d\tau + \bu(t) \\&=
    -A^2\by(t)+\bu(t).
\end{align*}

We will now make use of the fundamental Lemma in \citet{neural_oscil} that provides a finite-dimensional encoding of the operator $\Phi$ we wish to approximate, i.e., for any $\epsilon > 0$, there exists $N\in \mathbb{N}$, weights $A_1,\dots, A_N$ and a continuous mapping $\Psi: \R^{p\times N} \times [0,T^2/4] \to \R^q$, such that
\[
|\Phi(\bu)(t) - \Psi(\cL_t{\bu}(A_1),\dots, \cL_t{\bu}(A_N);t^2/4)| \le \epsilon,
\]
for all $\bu\in K$ and $t\in [0,T]$.

Using the result in \eqref{eq:sol_ode} we can then construct the input vector $[\cL_t{\bu}(A_1),\dots, \cL_t{\bu}(A_N),t^2/4]^\top \in \R^{pN}\times [0,T^2/4]$ based on the ODE system \eqref{eq:de_loss} underlying our LinOSS model:
\begin{equation}
\begin{bmatrix}
    \cL_t{\bu}(A_1) \\
    \cL_t{\bu}(A_2) \\
    \vdots \\
    \cL_t{\bu}(A_N) \\
    t^2/4
\end{bmatrix}
= \Tilde{\bA}\by(t),
\end{equation}
where $\by(t)\in \R^{pN}$ solves the system,
\begin{equation}
    \by^{\prime\prime}(t) = -\bA^2\by + \bB\bu(t) + \bb,
\end{equation}
with 
\begin{align*}
\bA &= \diag([\underbrace{A_1,\dots,A_1}_{p-\text{times}},\underbrace{A_2,\dots,A_2}_{p-\text{times}},\dots \dots ,\underbrace{A_N,\dots,A_N}_{p-\text{times}},0]), \\\bB &= [\underbrace{\ident_p,\dots,\ident_p}_{N-\text{times}},0]^\top,\quad \bb=[0,\dots,0,1/2]^\top
\end{align*}
where $\ident_p \in \R^{p\times p}$ is the identity matrix and $\Tilde{\bA}$ equals $\bA$, except that $\Tilde{\bA}_{pN,pN}=1$. Thus, $\Tilde{\bA}\by(t)$ computes exactly the input to the finite-dimensional operator $\Psi$.
By the universal approximation theorem for ordinary neural networks there exist weight matrices $\bW,\hat{\bW}$ and bias $\Tilde{\bb}$, such that,
\begin{equation*}
    |\Psi(\cL_t{\bu}(A_1),\dots, \cL_t{\bu}(A_N);t^2/4)-\bW\sigma(\hat{\bW}[\cL_t{\bu}(A_1),\dots, \cL_t{\bu}(A_N),t^2/4]^\top+\Tilde{\bb})| < \epsilon.
\end{equation*}
Thus, for every $\bu \in K$ we have,
\begin{align*}
    |\Phi(\bu(t)) - \bz(t)| \leq &|\Phi(\bu(t)) - \Psi(\cL_t{\bu}(A_1),\dots, \cL_t{\bu}(A_N);t^2/4)| \\
    &+ |\Psi(\cL_t{\bu}(A_1),\dots, \cL_t{\bu}(A_N);t^2/4) - 
    \bW\sigma(\Tilde{\bW}\by(t)+\Tilde{\bb})| \\&< 2\epsilon,
\end{align*}
with $\Tilde{\bW}=\hat{\bW}\Tilde{\bA}$.
Since $\epsilon>0$ was arbitrary, we conclude that for any causal and continuous operator $\Phi: C_0([0,T];\R^p) \to C_0([0,T];\R^q)$, compact set $K\subset C_0([0,T];\R^p)$ and $\epsilon > 0$, there exists a LinOSS model of form \eqref{eq:simple_loss}, which uniformly approximates $\Phi$ to accuracy $\epsilon$ for all $\bu\in K$. This completes the proof. 

\end{proof}

\begin{remark}
\rev{
A natural question arises regarding the performance of LinOSS and other state-space models when learning chaotic dynamical systems. A key issue in the context of learning chaotic systems with recurrent models is the inevitability of exploding gradients during training, as rigorously demonstrated in \citet{mikhaeil2022difficulty}. While our universality result, Theorem \ref{thm:unvsl}, still holds assuming the assumptions stated are fulfilled, it can be seen that the Lipschitz constant of the readout MLP (i.e., approximation of $\Psi$ in the proof of Theorem \ref{thm:unvsl}) would blow up, thereby enabling exploding gradients. However, it is crucial to emphasize that this issue persists with all recurrent models and is not unique to LinOSS or other state-space models. Interestingly, it has already been empirically shown in \citet{hu2024state} that state-space models (i.e., models based on linear dynamics) vastly outperform chaotic RNNs such as LSTMs \citep{hochreiter1997long} and GRUs \citep{chung2014empirical} for learning chaotic dynamical systems. 
}
\end{remark}

\end{document}